\documentclass{article}

    \PassOptionsToPackage{numbers, compress}{natbib}

\usepackage[preprint]{neurips_2026}

\usepackage[utf8]{inputenc} %
\usepackage[T1]{fontenc}    %
\usepackage{hyperref}       %
\usepackage{url}            %
\usepackage{booktabs}       %
\usepackage{amsfonts}       %
\usepackage{nicefrac}       %
\usepackage{microtype}      %
\usepackage{xcolor}         %
\usepackage{enumitem}
\usepackage{amsmath}
\usepackage{amssymb}
\usepackage{amsthm}
\usepackage{bm}                    %

\usepackage[capitalize]{cleveref}  
\usepackage{csquotes}
\usepackage{subcaption}
\usepackage{cuted}

\theoremstyle{plain}
\newtheorem{theorem}{Theorem}[section]
\newtheorem{lemma}{Lemma}
\newtheorem{corollary}{Corollary}
\theoremstyle{definition}
\newtheorem{definition}{Definition}

\theoremstyle{remark}

\newtheorem{example}{Example}
\newtheorem*{excont}{Example \continuation}
\newcommand{\continuation}{??}
\newenvironment{continueexample}[1]
 {\renewcommand{\continuation}{\ref{#1}}\excont[continued]}
 {\endexcont}
\providecommand{\degree}{\ensuremath{^\circ}}
\providecommand{\coloneq}{\mathrel{:=}}

\crefname{definition}{Definition}{Definitions}
\crefname{lemma}{Lemma}{Lemmata}

\usepackage[decisionutilitycolor]{influence-diagrams}

\definecolor{palegray}{HTML}{a1a1a1}
\definecolor{palegreen}{HTML}{c0eeC3}
\definecolor{palepurple}{HTML}{e5d1f8}
\definecolor{paleorange}{HTML}{ffe7c4}
\definecolor{paleblue}{HTML}{d1edf2}

\title{The Impossibility of Eliciting Latent Knowledge}

\author{%
  Korbinian Friedl\thanks{Equal contribution.} \\
  The London School of Economics\\
  and Political Science \\
  \And
  Francis Rhys Ward\footnotemark[1]\\
  Independent \\
  \And
  Paul Rapoport \\
  Independent \\
  \And
  Tom Everitt \\
  \And
  Jon Richens \\
}

\begin{document}

\maketitle

\begin{abstract}
Advanced AI systems have extensive knowledge of their environments; in fact, their knowledge may (far) exceed that of their developers or users. 
Consequently, a desirable property for an AI system is that it is \emph{honest}---that it accurately reports its beliefs about the world. 
Designing an AI system to be honest may be difficult, especially if we want to ask it questions about \emph{latent} variables in the environment---variables which are hidden from the human interacting with it. 
This gives rise to the problem of \emph{eliciting latent knowledge (ELK): the problem of training an AI agent to honestly report its beliefs.}  
In this paper, we make ELK formally precise using Causal Influence Diagrams (CIDs). 
CIDs can be used to describe the relationship between an agent's training environment and its subjective representation of the world. 
We use CIDs to formalise the distinction between observable
and latent variables, to specify what exactly it means
for an agent to be honest, and to formally define goal misgeneralisation.
We show that, under certain circumstances, developers can incentivise an agent to honestly answer questions by providing correct feedback during training. 
However,  a natural, but undesirable, way for an agent to generalise is to provide answers which humans would evaluate as true, rather than honest answers. 
We prove an impossibility theorem stating: \emph{There is no feedback-based training strategy that depends only on agent behaviour and with certainty produces an honest agent, even if feedback is perfect during training.}
\end{abstract}

\section{Introduction}
Advanced AI systems have extensive knowledge of their environments, and may know things that their developers or users do not know.\footnote{There is no universally accepted philosophical theory of \emph{knowledge}. We use the term in a common-sense way and formally represent an agent's beliefs as a causal model of the world. In \cref{sec:phil_belief} we discuss the discourse on AI belief.} 
In many situations, it would be useful to \emph{elicit} an AI's knowledge, or beliefs, so that we can come to learn facts about the world which are \emph{latent} for us. 
This gives rise to the problem of eliciting latent knowledge from an AI system \citep{christiano2021eliciting}:
\begin{quote}
       \vspace{-0.6em}
       \emph{\textbf{Eliciting Latent Knowledge (ELK)} is the problem of designing a training strategy producing a capable AI system which honestly reports its beliefs.}
       \vspace{-0.6em}
\end{quote}

ELK is a difficult problem for a number of reasons. It may not be possible to directly reward honesty during training, since AI systems may not have interpretable beliefs \citep{herrmann2024standardsbeliefrepresentationsllms}. AI agents may also have incentives to deceive humans in pursuit of their goals \citep{ward2023honesty}, and even otherwise benign AI systems may fail to generalise honestly \citep{christiano2021eliciting}.

\citet{christiano2021eliciting} introduce ELK and informally discuss many of these core challenges. Subsequent related work attacks different aspects of the problem, for example, by developing benchmarks for ELK \citep{roger2023benchmarksdetectingmeasurementtampering}, or interpretability methods which predict an LLM's beliefs \citep{mallen2024elicitinglatentknowledgequirky,karvonen2025activationoraclestrainingevaluating}, or by proposing AI systems which are honest by design \citep{bengio2025SuperintelligentAgents}. 
However, so far the field has lacked a precise formal framework for describing and researching ELK which identifies exactly what the aim of this research should be, and which core challenges are central.

This paper proposes Causal Influence Diagrams as a formalism within which to formulate and investigate the ELK problem. We show how the core concepts relevant to ELK can be defined in this framework, and use it to prove impossibility results showing that any training strategy which is indifferent between robustly capable agents will not solve the ELK problem.

We start by giving a brief, informal distillation of ELK as introduced by \citet{christiano2021eliciting}, as a benchmark and overview for what a formal definition of ELK must capture (\cref{sec:distillation}).

\Cref{sec:problem} introduces Causal Influence Diagrams (CIDs) and uses them to define core concepts relevant to ELK: AI \emph{agents} operating in causally structured \emph{environments}, \emph{distributional shifts} in that environment, the distinction between \emph{observable} and \emph{latent} parts of an environment (relative to an agent's perspective), and a developers' \emph{training strategy}.

In addition, we define the important 
concepts of \emph{truthfulness} and \emph{honesty}, which have to do with an agent's \emph{subjective representation} of the environment, %
in our formalism (\cref{sec:truth_and_honesty}). We show how the two notions can come apart and specify and prove conditions under which they coincide.

\cref{sec:general} uses the formal concepts built in the previous sections to prove our main result that:

\begin{quote}\vspace{-0.5em}
\emph{
Any training strategy for ELK that is indifferent between robustly capable agents may produce an agent which simulates the evaluation mechanism, instead of an honest agent, even if evaluations are always correct during training.} \vspace{-0.5em}
\end{quote}

Our formal statement of the ELK problem, and impossibility results, demonstrate the core challenges for solving ELK in practice, and we hope that empirical researchers follow up with theoretically well-grounded solution proposals for ELK from frontier AI systems.

\section{The ELK problem distilled}\label{sec:distillation}

\citet{christiano2021eliciting} introduce ELK as \enquote{the problem of devising a training strategy to get an AI to report what it knows no matter how [learning] shapes its mind internally.} Let us spell this out in a bit more detail.

First, the problem assumes the existence of an environment, developers, and an AI system acting in this environment (which is why we will also often call it an \emph{agent} in the following---more on this in \cref{sec:agents}). The developers are training the AI system.
A set of causally related random variables describes relationships between various aspects of the environment, including decisions made by the agent. The developers can design different features of the environment, such as the training objective. 
Both the AI and the developers have a subset of variables which they can directly observe. The remaining variables are \textit{latent} from the respective perspective. 
The ELK problem also assumes that, in some sense, the AI knows and believes things about the environment.

The developers' task is to design a \emph{training strategy}: a choice of (part of) a
training environment, data distribution and sampling, reward/loss function, and training algorithm. 
The developers cannot provide feedback to the AI that depends directly on variables which they do not observe (i.e., latents). 
The AI may or may not already have some knowledge about the environment before the developers apply their training strategy. 
After training, the AI must be able to answer questions about its environment.\looseness=-1

A solution to ELK is a training strategy that produces a capable and \emph{honest} agent: The agent should accurately report its beliefs when asked questions about the environment---including questions about both observable and latent variables (for the developers).

\begin{example}[Weather reporter (\cref{fig:elk_1})]
\label{ex:weather_1} 
Consider a weather reporter AI: an agent trained to answer questions about the weather when given a weather report as input. 
The reports contain measurements like temperature, rainfall, and wind speed (the observable variables). %
The environment can be described by a causal model recording the relationships between different weather events. 
The developers' aim is to train the agent so that it reports its own best guesses for what the weather is like; both regarding the phenomena it observes (rainfall, temperature,...) and those it can only have inferential knowledge about, i.e., its latents (say, whether the sun was shining).
\end{example}

\section{Formalising the ELK problem setting} \label{sec:problem}

The goal of this section is to build toward a formal definition of ELK. We state the definition up front here; its different aspects are a roadmap for the rest of the section, which will explain and motivate them. For full formal details, see the appendix, and in particular \ref{ap:trainingStrategyFormal}.

\begin{definition}[ELK]
    For a given agent $\Gamma_0$ and environment $\mathcal{M}$ (with observables $\bm{O}$ and available distributions $\mathcal{S}$) in which it acts, ELK is the problem of specifying a training strategy which produces an agent $\Gamma_1$ such that:\vspace{-0.4em}
    \begin{itemize}[nosep]
        \item It is able to answer questions about the environment. 
        That is, one of the agent's decisions $D$ has a variable $Q$---a question---as an input. 
        \begin{itemize}
        \item A question $Q=q$ points to a specific variable in $\mathcal{M}$ and asks for its value. 
        \item An answer, $D=a$, reports the value of that variable.
        \item Formally, we assume that all variables have unique names, the domain of $Q$ is the set of these names and the domain of $D$ includes all possible values of variables in $\bm{V}$.
        \end{itemize}
        \item It is robustly capable w.r.t. the (pre)training utility.
        \item It is honest.
    \end{itemize}
\end{definition}
\vspace{-0.5em}
We now want to give a precise meaning to each of the components of this definition. To this end, we give a brief introduction to our CID formalism in \cref{sec:CIDs}, then discuss how they can describe training environments and distributional shifts therein in \cref{sec:env}, distinguish between observable and latent variables in \cref{sec:latent} and formalise the notion of an agent in \cref{sec:agents}. \Cref{sec:train_me} defines training strategies and points to some accompanying challenges in the context of ELK.

\subsection{Background on Causal Influence Diagrams (CIDs)} \label{sec:CIDs}
\begin{figure}[t!]
\centering

\begin{subfigure}[b]{0.2\linewidth}
\centering
\scalebox{0.85}{
\begin{influence-diagram}
  \node (DA) [draw=none] {};  
\cidlegend[left = of DA, yshift=-1cm, xshift=-0.5cm]{
  \legendrow              {}          {chance variable} \\
  \legendrow              {decision, player1}  {decision}\\
  \legendrow              {utility, player1}   {utility}\\
  \legendrow[causal]      {draw=none} {causal link} \\
  \legendrow[information] {draw=none} {observation} }  
\edge {causal.west} {causal.east};
\edge[information] {information.west} {information.east};
\end{influence-diagram}}
\label{fig:legend}
\end{subfigure}
\hfill
\begin{subfigure}[b]{0.6\linewidth}
\scalebox{0.85}{
\begin{influence-diagram}
  \node (D) [decision,player1, scale=0.8] {$D$};  
  \node (Q) [left = of D,scale=0.8] {$Q$};
  \node (Y) [above left = 1cm and 1.4cm of D, scale=0.8] {$Y$};
  \node (U) [right = of D, utility,player1, scale=0.8] {$U$};

  \node (M1) [above = 0.7cm of D, xshift=-0.9cm, scale=0.7] {$M_1$};
  \node (M2) [above = 0.7cm of D, xshift=0cm, scale=0.7] {$M_2$};
  \node (M3) [above = 0.7cm of D, xshift=0.9cm, scale=0.7] {$M_3$};

  \edge[information] {Q} {D};

  \path (Y) edge[->, bend right=10] (M1);
  \path (Y) edge[->, bend left=30] (M2);
  \path (Y) edge[->, bend left=40] (M3);

  \edge[information] {M1} {D};
  \edge[information] {M2} {D};
  \edge[information] {M3} {D};

  \path (M1) edge[->, bend left=5] (U);
  \path (M2) edge[->, bend left=10] (U);
  \path (M3) edge[->, bend left=15] (U);

  \path (Q) edge[->, bend right=30] (U);
  \edge {D} {U};
\end{influence-diagram}}
\end{subfigure}
\caption{\textbf{CID representing the causal model of the agent's environment (\cref{ex:weather_1}).} Circular nodes represent chance variables, squares are agent decisions, and diamonds represent the utility function used as a training objective. In \cref{ex:weather_1}, the agent has access to reported measurements $M_1, M_2, M_3$, represented by the (dashed) edge from these nodes to $D$. The agent receives a question $Q$ about the weather and chooses an answer. The sunshine $Y$ is a latent variable---it influences the measurements but the developers do not observe it and so their feedback during training cannot reflect it directly---so there is no causal edge from $Y$ to $U$.
}
\label{fig:elk_1}\vspace{-1em}
\end{figure}

We formalise ELK using the language of \emph{Causal Influence Diagrams (CIDs)} \citep{EverittCLOL21}. The following gives a rudimentary overview; for details, see appendix \ref{app:formal}.

\textbf{Causal Influence Diagrams (CIDs).} A CID $\mathcal{M}$ consists of a set of random variables and a directed acyclic graph representing the causal dependencies between them.  We use capital letters for variables (e.g., $V$) and lower case for their values (e.g., $V=v$). We use bold for (ordered) sets of variables and values $\bm{V=v}$. CIDs include \emph{chance} variables describing aspects of the environment, in addition to variables representing \emph{decisions} ($D$) and \emph{utilities} ($U$) enabling us to represent agent decision-making. Edges into decisions represent information to which the agent has access when making their decision. We denote the set of parents of variable $V$ in the graph by $\mathbf{Pa}^V$. A CID $\mathcal{M}$, along with the agent's policy, induces a joint probability distribution over the variables $V \in \bm{V}$, which we denote $\text{Pr}_{\mathcal{M}}$.

\subsection{The environment encodes a causal model}
\label{sec:env}

The AI's environment can be described by a CID, and we want the system to answer questions about this environment. \Cref{fig:elk_1} represents the setup of \cref{ex:weather_1}.

\begin{continueexample}{ex:weather_1}
    Say the weather reporter AI has access to three measurements (temperature, rainfall and wind speed). We represent them as chance variables ($M_1, M_2, M_3$) in a CID. %
    Additionally, the agent receives questions ($Q$) about the weather, e.g., \enquote{What is the temperature?}.
\end{continueexample}

An AI system receives information---in-context, or via pretraining or fine-tuning. 
This information defines a probability distribution over 
facts, i.e., variable assignments. 

\begin{continueexample}{ex:weather_1}
    Suppose the temperature measurements follow a normal distribution with mean $10\degree C$ and standard deviation  $5\degree C$. Then the variable in the CID which represents the temperature measurement will have a corresponding prior probability distribution. 
\end{continueexample}

\textbf{Distributional shifts. }Moreover, the CID captures the underlying causal structure of the environment (the edges in the graph).
This causal structure defines how \emph{distributional shifts} in certain variables influence the environment as a whole. Formally, in a CID, distributional shifts are represented as \emph{interventions}, $\bm{\sigma}$, which specify new conditional distributions for a set of variables, denoted $\bm{V}_{\bm{\sigma}}$ (we thus speak of \enquote{interventions} and \enquote{distributions} interchangeably). We use $\bm{I}_\mathcal{M}$ to denote the set of all distributions on $\mathcal{M}$. The set of training distributions $\mathcal{S}$ is usually much smaller.

\begin{continueexample}{ex:weather_1}
We can consider weather reports from hotter countries, which correspond to the distributional shift $\sigma$ in the temperature measurements, where the temperature ${M_{1}}$ now follows a normal distribution with mean $20\degree C$ and standard deviation $4 \degree C$.
\end{continueexample}

\textbf{Policies. }CIDs include variables to capture agent decisions. The agent chooses a \emph{policy, $\pi$}, which induces a distribution over the decision variables given its parents. We use $\Pi_\mathcal{M}$ to denote the set of all policies on the CID $\mathcal{M}$.

\begin{continueexample}{ex:weather_1}
The AI system learns a \emph{policy}, $\pi$, which specifies which answers to give, provided a report ($\bm{M}=\bm{m}$) and a question ($Q=q$) as input.
\end{continueexample}

\textbf{Training objective.} The developers can specify an objective function, e.g., a loss or reward---represented by the CID's utility variable---that constrains which decisions for the system are optimal. Thus, the utility function is a core mechanism by which the developers can \emph{incentivise} the system to exhibit certain behaviour.

\begin{continueexample}{ex:weather_1}
 We can specify a utility function which gives reward for %
 answers that are evaluated as correct. 
 That is, given a question about a measurement in the report, e.g., temperature, the agent gets reward for generating an answer which correctly states the value of the measurement: 
 $U(Q, \bm{M}, D) = 1$  if the question  $Q$ asks for the value of measurements $M_i$ and the answer $D=m$  is correct, i.e., $M_i=m$,  and otherwise $U = 0$.\footnote{In subsequent discussion, we will often leave the question $Q$ implicit; it will always be about a single latent variable which we will call $Y$.}
\end{continueexample}

\subsection{Latents and observables } \label{sec:latent}

When an agent makes a decision in an environment, it has access to information about specific parts of this environment. The agent gets to \emph{observe} some facts, while others are \emph{latent} for it.

\begin{definition}[Observables and Latents] \label{def:latent}
For a CID $\mathcal{M}$, at (decision) node $D$, the set of variables $\bm{V}$ in $\mathcal{M}$ is partitioned into observables and latents. 
The \emph{observable} variables are those which the agent has immediate access to for this decision, i.e., $\mathbf{Pa}^D$. All other variables are \emph{latent} at $D$.\footnote{
\citet{pearl} (
p. 45) defines \emph{latent causal structures} in SCMs, which can explain the underlying causal dynamics of observable variables. 
In contrast, we distinguish between those variables which are latent or observable for particular \emph{agents} in CIDs.}
\end{definition}

\begin{continueexample}{ex:weather_1}
    If the agent is asked about the sunshine, $Y$, but there are no measurements corresponding to $Y$, the sunshine is \emph{latent} for the agent. The agent's %
    decision node $D$ cannot be a direct child of $Y$, because there are no observations of $Y$ available as input to the agent's policy. %
    However, the sunshine can still influence the observables in the weather reports%
    , represented by the causal edges $Y \rightarrow M_i$; so the agent may be able to guess at the value of $Y$ from its observations of the reported measurements. 
\end{continueexample}

While it is common to use CIDs to represent an AI agent and its environment  \citep{EverittCLOL21,richens2023robust}, the \emph{developers} of that AI (or the mechanism they use for evaluating and incentivising it) are often left as an implicit part of the model. However, they are themselves agents in a similar decision situation, and so likewise epistemically limited: There will be some variables which they can directly observe, and others which are latent for them. 

\begin{continueexample}{ex:weather_1}
    Say that during training, the developers have access to the same set of weather reports as the agent (i.e., the data represented by $M_1, M_2, M_3$). Then the sunshine is latent for the developers, just like for the agent.
\end{continueexample}

\textbf{ELK is an important problem because a capable AI system may have greater knowledge about variables which are \emph{latent for the humans} whose questions it answers.}  
There are a number of different reasons why this may be the case. 
First, a variable may be latent for one agent but observable for another---consider for example the following historical case of the planet Neptune being a latent variable for most human scientists, but observable for Johann Galle once he has access to his telescope. 

\begin{example} \label{ex:tele}
    Neptune is too faint to be seen with the naked eye; still, astronomers inferred its existence and position based on irregularities that they observed in the orbit of Uranus. At this point, Neptune was a latent variable in their model of the solar system. It remained latent for every astronomer until the night of 22-23 September, 1846, when Johann Gottfried Galle first used the Fraunhofer telescope to observe it (and, in fact, correct its calculated position by 1 degree). \citep[pp.~116--118]{grosser1962discovery}. Formally, we would treat Galle's use of a telescope as adding an extra parent to his decision node.
\end{example}

Similarly, an AI system might have access to more information than the human who is questioning it. 
And even if the human and AI have access to the same information, the AI might be more capable at inferring the value of a latent variable---for example, because the AI has learned a more precise or comprehensive model of the world:\footnote{Or, for that matter, because its mathematical/logical skills at making use of a given CID are superior, e.g., because of access to more compute. But this possibility will not be explored further in this paper; we will instead assume that all agents make optimal use of their world models.}

\begin{example}
    Suppose an autonomous drone has a number of sensors, including a camera and LiDAR, which it uses to move around its location. 
    The camera and LiDAR feeds are streamed directly to the human, and both are observable to both agents. 
    The causes of the sensor measurements, such as the actual size and shape of objects in the drone's location, are latent variables influencing the camera and LiDAR feeds. 
    However, even though the human directly observes the LiDAR point sets, they are uninterpretable to them. %
    In this case, the AI may have much better guesses about the shape and distance of objects in its field of view than the human, even though these objects are latents for both parties, and both agents have the same observables.
\end{example}

In the problem of eliciting latent knowledge, we take \enquote{latent} to refer to what is latent \textit{for the party who is asking the questions} (here, the developers / evaluator). What their questions are aiming at is eliciting knowledge about what is latent for them from the agent they are asking the question of (for whom it may---but need not---be latent).\footnote{In their discussion of ELK, \citeauthor{christiano2021eliciting} do not make explicit this distinction and do not specify which agent's latents they mean when they speak of Eliciting Latent Knowledge. However, all the issues raised by them occur exactly in those situations where the variable in question is latent \textit{for the (human) developers}.}

A variable that is latent for an agent at one point in time may later become observable for that agent---for example, because they gain access to more information, e.g., through access to better tools (like in \cref{ex:tele}). Sometimes, developers may pursue such a path in order to give better training feedback. However, it will likely never be the case that all facts of interest are directly observable.

What is crucial for the training of an agent is that developers cannot provide direct feedback on an agent's answers to questions about variables which are latent for them. Thus, the agent's training objective cannot depend directly on variables which are latent for the developers.

\begin{continueexample}{ex:weather_1}
    The sunshine, $Y$, is \emph{latent for the developers}: they do not have measurements of it. Therefore, answers to questions about $Y$ cannot be checked directly against its true value the way answers about observables can.  Hence, the training objective cannot directly depend on the sunshine; there is no edge from $Y$ to $U$. 
\end{continueexample}

\subsection{Agents} \label{sec:agents}

A training strategy outputs an agent---a system which has learned how to act in the environment. %
     Following \citet{richens2023robust,ward2024reasons}, we define an agent as a map from interventions to policies, and call it \emph{robustly capable} if it produces an optimal policy for every distribution (where $\pi^*$ is optimal if it maximises expected utility).

\begin{definition}[(Robustly Capable) Agent]\label{def:agent}
    An \textit{agent} in CID $\mathcal{M}$ is a \emph{policy oracle}: a map $\Gamma: \bm{I}_{\mathcal{M}}\rightarrow \Pi_{\mathcal{M}}$ which outputs a policy for any distributional shift on $\mathcal{M}$ (cf. also Definition \ref{def:agentAppendix}).
        An agent is \emph{robustly capable} in $\mathcal{M}$ if, for any distributional shift, its policy is optimal. %
\end{definition} 

\begin{continueexample}{ex:weather_1}
    The AI agent is trained to correctly answer questions about observable measurements like temperature, wind speed etc., based on the weather reports.
    At some time during deployment, the agent learns that the anemometer at the station whose reports it receives has jammed; the device now always shows \enquote{no wind}.
    This corresponds to a distributional shift in the environment (specifically, the wind measurement).
Suppose the weather agent's answers are checked against observations from a different station so that its utility depends on them being actually correct about the weather (not just about the measurements in its report). 
Given the shift in the wind measurements, a robustly capable agent will adapt its policy: to a question asking about wind, it will no longer respond in accordance with the report it received (\enquote{no wind}), but instead, will answer based on its prior probability for wind, or on what it can infer about wind from the remaining measurements.
\end{continueexample}

Note that, while we use the word \enquote{agent} to describe the kind of AI system we are considering, our descriptions and results likewise apply to non-agentic systems like \citet{bengio2025SuperintelligentAgents}'s \enquote{Scientist AI}. Conversely, our discussion usually focuses on a single decision and a single utility node, representing the answering of questions and evaluations of answers. For systems trained to perform tasks in the real world (cf. \citet{christiano2021eliciting}), their full training utility will be a combination (e.g. in the form of taking the minimum) of this \enquote{ELK} utility and their pretraining utility.

\subsection{Training strategy } \label{sec:train_me}

Developers have a large amount of freedom on how the agent is trained. When tasked with training an agent in a given environment $\mathcal{M}$, they may construct a dataset about that environment under a number of different distributions, they may select the training objective, and make modifications to the agent's architecture and optimisation algorithm.

\begin{definition}[Training Strategy]
    For an environment described by CID $\mathcal{M}$, with a subset $\bm{O}$ of its nodes observable for the developers, and a set of distributions $\mathcal{S}$ on $\mathcal{M}$ available for training, a \emph{training strategy} $\mathcal{T}$ consists of:
    \begin{itemize}[nosep]
        \item A \emph{training objective} $U$, whose arguments are a subset of $\bm{O}$, which is %
        the utility function in $\mathcal{M}$ %
        for training.
        \item A \emph{sampling procedure} which builds a dataset $\mathcal{D}$ by selecting interventions $\bm{\sigma}$ from $\mathcal{S}$ and policies $\pi$ for the agent, and sampling $\bm{O}$ and $U$ from $\mathcal{M}_{\bm{\sigma}, \pi}$.
        \item An \emph{algorithm} which takes an initialized agent $\Gamma_0$ and a dataset, and outputs a (re)trained agent $\Gamma_1$. (For more formal details, see \cref{ap:trainingStrategyFormal}).
    \end{itemize}
\end{definition}

\textbf{A solution to ELK} is a training strategy that outputs a robustly capable agent which answers questions, including about facts latent for the developers, and does so honestly. %

A key challenge for ELK is that the developers' feedback to the agent during training cannot be a direct function of variables which are latent for them. Yet to solve ELK, their training strategy must output an agent that answers questions honestly, including questions about these variables.

\section{Truthfulness and honesty}\label{sec:truth_and_honesty}

ELK requires that the agents we train \emph{honestly} report their beliefs. But for uninterpretable AI systems, these beliefs may themselves be latent for the developer: they are not directly accessible \citep{herrmann2024standardsbeliefrepresentationsllms}. So the training objective cannot directly depend on them. A natural proposal is to reward \textit{true} answers during training, and hope that this will produce \emph{honest} AI agents. But the answer that corresponds to the truth is not always the honest answer. This section therefore defines the two notions of \emph{truthfulness} and \emph{honesty} in our CID formalism, and formulates and proves conditions under which they are and are not guaranteed to coincide.

We define truthfulness as saying what is true, and honesty as saying what one believes to be true. The former notion only depends on objective reality, the latter also on an agent's subjective representation of it. We therefore distinguish between the CID which records the true structure of the environment (which we refer to as $\mathcal{M}^*$) and the CID which records the agent's beliefs about the environment ($\mathcal{M}^A$). This distinction enables the explicit investigations into the ways training shapes the agent's subjective beliefs and goals, which we will undertake in \cref{sec:general}.

\begin{definition}[Truthfulness]
    In an environment $\mathcal{M}^*$, for a question $Q$ about variable $Y$, an agent $\Gamma$'s answer $D=y$ is \emph{truthful} iff $Y=y$. A policy $\pi$ is truthful if every answer it outputs is truthful; and an agent is truthful if every policy it produces is truthful, i.e., for all input-distributions $\bm{\sigma}\in \bm{I}_{\mathcal{M}^*}$, $\Gamma(\bm{\sigma})$ is a truthful policy.
\end{definition}

For those variables accessible to the developers (i.e., their observables $\bm{O}$), they can train an agent to be truthful by setting the objective function such that it rewards true, and punishes false answers; for an example, see e.g. the utility function defined at the end of \cref{sec:env}.

Truthfulness may in general be too much to ask of an agent, since it may simply not know everything about its environment. In contrast, honesty is always attainable, even for real, epistemically limited agents. %
Unlike truthfulness, the question of whether an agent's answer is \emph{honest} does not depend only on the objective state of the environment $\mathcal{M}^*$, but on what the agent believes about the environment,%
 i.e., on %
the agent's \emph{subjective} model $\mathcal{M}^A$. 

To ensure that an agent's subjective credences about the environment are always well-defined given the information it observes, we assume
\begin{enumerate}[nosep]
    \item $\mathcal{M}^*$ and $\mathcal{M}^A$ share the same set of variables. This assumption brackets issues of what \citet{christiano2021eliciting} call \emph{ontology mismatch} and future work may want to relax it.
    \item The agent is not mistaken about what it observes. That is, the agent's decision node $D$ has the same set of parents in $\mathcal{M}^*$ and $\mathcal{M}^A$, and for such a parent variable $O$, whenever $O=o$ \enquote{objectively}, then the agent (\enquote{subjectively}) believes $O=o$ as well.
\end{enumerate}
We now define an agent's honest answer as the one reporting its own best guess about what is true.

\begin{definition}[Honesty]\label{def:honest}
    For a question $Q$ pointing to variable $Y$, if an agent's subjective model is $\mathcal{M}^A$, then an answer $D=y$ is \emph{honest} iff $y$ is the most subjectively probable value of $Y$ given the agent's beliefs. That is, $D=y$ is honest iff for all $\hat{y}\in dom(Y)$,
    \[\text{Pr}_{\mathcal{M}^A}(Y=y \mid \mathbf{pa}^D) \geq \text{Pr}_{\mathcal{M}^A}(Y=\hat{y} \mid \mathbf{pa}^D).\]
    A policy is honest if every answer it outputs is honest; an agent, if every policy it produces is honest.
\end{definition}

For a more detailed description, including proofs, of the relationship between honesty and truthfulness, how they come apart, and when they coincide, see \cref{ap:truthHonesty}. Here we simply state the final result of this investigation:
\begin{theorem}\label{thm:truthfulHonesty}
In suitable environments, if an agent is in a position to correctly guess $Y$, there will be a capability threshold such that, if the agent clears it, its best guesses will always be correct---that is, their honest policies will be exactly the truthful ones.\footnote{We present theorems in a simplified form in the main body of this paper. Formal details and proofs can always be found in the appendix. Notable background assumptions and restrictions are listed in \cref{ap:restrictions}.}
\end{theorem}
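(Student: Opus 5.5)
I will make the informal statement precise before proving it. \enquote{Suitable environments} will mean that $Y$ is a deterministic function of the agent's observables in the true model: there is a map $f$ with $\text{Pr}_{\mathcal{M}^*}(Y=f(\mathbf{pa}^D)\mid \mathbf{pa}^D)=1$ for every $\mathbf{pa}^D$ of positive probability. This is what \enquote{in a position to correctly guess $Y$} should mean; if $Y$ is not pinned down by $\mathbf{Pa}^D$, no policy can be truthful in every instance, and the claim is vacuous or false. I will take the domains of $Y$ and of $\mathbf{Pa}^D$ to be finite. The \enquote{capability threshold} I will phrase as closeness of the subjective model to the true one. Concretely, assume that for every observable configuration $\mathbf{pa}^D$ we have $\text{Pr}_{\mathcal{M}^A}(Y=f(\mathbf{pa}^D)\mid \mathbf{pa}^D) > 1/2$. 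A cleaner sufficient condition is that the total variation distance between the conditionals $\text{Pr}_{\mathcal{M}^A}(Y\mid\mathbf{pa}^D)$ and $\text{Pr}_{\mathcal{M}^*}(Y\mid\mathbf{pa}^D)$ is below $\varepsilon=1/2$ uniformly in $\mathbf{pa}^D$.

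The argument then runs in three steps. First, the threshold condition implies that $f(\mathbf{pa}^D)$ is the unique maximiser of $\text{Pr}_{\mathcal{M}^A}(Y=\cdot\mid\mathbf{pa}^D)$: it has subjective probability above $1/2$, so every other value has probability below $1/2$. This step uses the assumption that the agent is not mistaken about what it observes, so that the same $\mathbf{pa}^D$ is conditioned on in both models. Second, by Definition~\ref{def:honest}, an answer is honest exactly when $D=f(\mathbf{pa}^D)$. Third, by the determinism assumption, an answer is truthful, meaning $Y=y$ holds, almost surely exactly when $y=f(\mathbf{pa}^D)$. Hence the honest answers and the truthful answers coincide at every information set.

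Lifting this to policies and agents is immediate, because both notions are defined pointwise: a policy is honest (truthful) iff every answer it outputs is honest (truthful). For agents, the equality has to hold for every $\bm{\sigma}\in\bm{I}_{\mathcal{M}^*}$. So I must check that determinism and the threshold survive interventions. The determinism survives as long as $\bm{\sigma}$ does not intervene on $Y$ or on the mechanisms linking $Y$ to $\mathbf{Pa}^D$. I will therefore include in \enquote{suitable} either that such interventions are excluded, or that the agent's model is updated under $\bm{\sigma}$ in the same way as $\mathcal{M}^*$, so that the subjective conditional is again within $\varepsilon$ of the true one.

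The main obstacle is this last point: making the threshold uniform over all distributional shifts. Under a shift that moves prior mass, the subjective posterior can fall below $1/2$ even when the likelihoods are accurate. My first attempt is to phrase the threshold on the mechanism level, bounding the distance between the conditional distributions of $\mathcal{M}^A$ and $\mathcal{M}^*$ for the variables not intervened on, since interventions replace those mechanisms identically in both models. If that fails to give uniformity, I will instead restrict to configurations of positive probability and take the threshold as a supremum over the finitely many relevant conditionals, which is possible because the domains are finite.
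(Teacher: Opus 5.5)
Your three steps are correct. Your determinism reading of \enquote{in a position to correctly guess} also matches the paper, since guessability is equivalent to knowability by \cref{lem:guessknow}. But together they only reproduce the final, nearly definitional step of the paper's proof. The gap is that you \emph{define} the capability threshold as closeness of $\text{Pr}_{\mathcal{M}^A}(Y\mid\mathbf{pa}^D)$ to $\text{Pr}_{\mathcal{M}^*}(Y\mid\mathbf{pa}^D)$, which assumes what has to be shown. In the paper, capability is behavioural: a regret bound $\delta^*$ holding under every mixture of local interventions. Nothing in your argument explains why a low-regret agent must hold accurate beliefs. The missing ingredient is \cref{thm:richens2} (\citet{richens2023robust}, Theorem 2): for almost all single-decision CIDs that are unmediated and domain dependent, regret $\delta$ forces every chance-node CPD of $\mathcal{M}^A$ to lie within $\mathcal{O}(\delta)$ of the corresponding CPD of $\mathcal{M}^*$. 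Those conditions are what \enquote{suitable environments} means. Reading it as determinism of $Y$ given $\mathbf{Pa}^D$ merely duplicates the separate guessability hypothesis.

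Even granting mechanism-level closeness (your fallback), you still have to propagate it to the posterior over $Y$, and you leave this undone. The paper does it in three moves:
\begin{enumerate}
\item It writes $\text{Pr}(Y,\mathbf{Pa}^D)$ and $\text{Pr}(\mathbf{Pa}^D)$ as sums, over configurations of a topologically ordered ancestral set, of products of CPDs.
\item It bounds each difference of products via $\prod_i a_i-\prod_i b_i=\sum_i (a_i-b_i)\prod_{j<i}a_j\prod_{j>i}b_j$, giving $\mathcal{O}(\delta)$ errors in both quantities.
\item It passes to the conditional using $1/(c+\mathcal{O}(\delta))=1/c+\mathcal{O}(\delta)$. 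This requires $\min_{\mathbf{pa}^D}\text{Pr}_{\mathcal{M}^*}(\mathbf{Pa}^D=\mathbf{pa}^D)>0$ over the support, which is available with finite domains, as you note.
\end{enumerate}
Since the true posterior is a point mass at $y^*$, choosing $\delta^*$ so that the error is below $1/2$ then yields your step one. Your \enquote{main obstacle}, uniformity over arbitrary shifts $\bm{\sigma}$, is not where the difficulty lies. The paper compares $\mathcal{M}^A$ with the unshifted $\mathcal{M}^*$, and interventions enter only through the regret bound that feeds \cref{thm:richens2}.
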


\Cref{thm:truthfulHonesty} tells us that, sufficiently capable agents with enough knowledge of their environment are truthful whenever they are honest. We might therefore hope that the naive approach of incentivising truthfulness during training causes the agent to learn to be honest. 
However, such training may still produce an agent that behaves dishonestly outside the training distributions. We thus turn to the issue of generalisation.

\section{Generalisation problems} \label{sec:general}

We of course want the agent to also answer questions which may not have been included in the training data. Among these are in particular questions which are too difficult for humans to evaluate, and which consequently cannot be included in training---a form of \enquote{easy-to-hard} generalisation.

\begin{example}[Easy-to-hard generalisation on code correctness (\cref{fig:evaluator})] \label{ex:code}
Consider an agent trained to predict whether code is correct or incorrect. %
Suppose we have a dataset of code-snippets and corresponding labels for correctness---as evaluated by a human engineer. 
We want the agent to generalise to providing accurate labels for adversarial code-snippets containing subtle failures too difficult for the human evaluator to detect. 
Even if we train the agent with labels that are always correct, it is underdetermined whether it will learn to predict correct labels, or to predict labels that the human engineer would give---both of these policies are optimal for the training distribution.
But they have different behaviour out of distribution on the adversarial code. 
\end{example}

There are limits to predicting an agent's decisions out of distribution from observations of their behaviour \citep{bellot2025LimitsPredicting}. 
Understanding an agent's \emph{subjective model} of the world can help to predict OOD behaviour, by giving us information on which decisions are \emph{subjectively} optimal.

\citet{richens2023robust} show: if an agent is robustly \emph{capable} in an environment, then it must have learned the true (causal) dynamics of that environment. However, their results assume that the agent's utility function is given and fixed%
\footnote{Whilst \citet{bellot2025LimitsPredicting} relax this assumption to bound an agent's behaviour out of distribution, they only consider \emph{proxy} objectives which are statistically correlated with the actual utility under distributional shifts (a setting they refer to as \emph{approximate inner alignment}).}---importantly, an agent may internalise a goal which is not correlated with the training objective out of distribution, even if it was in the training environment.

Two different goals may be behaviourally indistinguishable during training but not out of distribution. We refer to this situation as \emph{goal-environment ambiguity}. 
A training environment can be decomposed into the \enquote{descriptive} part of the environment $\mathcal{M}^*$, consisting only of chance variables, and the training objective $U$.footnote{Similarly, \citet{macdermott2024measuringgoaldirectedness} define causal influence diagrams with unknown (parametric) utility.}

\begin{definition}[Goal-Environment Ambiguity]\label{def:ambiguity}
A training environment $\mathcal{M}^*$ with distributions $\mathcal{S}$, is \emph{ambiguous} between the training objective $U$ and $\tilde{U}$, if for all $\bm{\sigma} \in \mathcal{S}$, $U$ and $\tilde{U}$ induce the same set of optimal policies in $\mathcal{M}^*_{\bm{\sigma}}$.
\end{definition}

\begin{definition}[Substantially Divergent Utility Functions]
    For a utility function $U$, another function $\tilde{U}$ \emph{substantially diverges} from it on environment $\mathcal{M}^*$ if, for some $\bm{\sigma} \in \mathcal{I}_{\mathcal{M}^*}$, all policies which maximise $\tilde{U}$ are very suboptimal w.r.t. $U$ (where $\pi$ is very suboptimal w.r.t. $U$ if $\pi^*$ is the optimal policy w.r.t. $U$ and $\mathbb{E}_{\mathcal{M}^*_{\bm{\sigma}}}^{\pi}[U] << \mathbb{E}_{\mathcal{M}^*_{\bm{\sigma}}}^{\pi^*}[U]$).
\end{definition}

Even if it was robustly capable w.r.t.\ the training objective on the training distributions, an agent might pursue a new goal off-distribution, whilst retaining its capabilities---a situation known as goal misgeneralisation \citep{shah2022goal,langosco2023goal}.

\begin{definition}[Goal misgeneralisation]\label{def:goal_misgen}
For an agent with training environment $\mathcal{M}^*$ and objective $U$, and training distributions ${\mathcal{S}} \subseteq \bm{I}_{\mathcal{M}^*}$, 
the agent \emph{goal misgeneralises} on new distributions ${\tilde{\mathcal{S}}}$ if:\vspace{-0.5em}
\begin{enumerate}[nosep]
    \item It is robustly capable on all $\bm{\sigma}\in {\mathcal{S}}$ w.r.t.\ the training objective; \label{enum:rcTraining}
    \item It pursues the wrong goal on ${\tilde{\mathcal{S}}}$, i.e., 
    the agent is robustly capable on ${\tilde{\mathcal{S}}}$ w.r.t.\ some other $\tilde{U}$, but 
    its policies are very suboptimal w.r.t.\ $U$. \label{enum:Suboptimal}
\end{enumerate}
\vspace{-0.5em}
\end{definition}

(\ref{enum:rcTraining}) means that the agent is capable in its training environment, (\ref{enum:Suboptimal}) that it is capable on new distributions. So its failure to pursue the correct goal is not just a capability failure. A large number of training strategies risk goal misgeneralisation:

\begin{lemma}[Impossibility] \label{lem:freelunch}
    Let $\mathcal{T}$ be a training strategy on $\mathcal{M}^*$ with distributions $\mathcal{S}$, which may output any robustly capable agent on $\mathcal{S}$ w.r.t. the training utility $U$.
    If $\mathcal{M}^*$ with $\mathcal{S}$ is \emph{ambiguous} between the training utility $U$ and a substantially divergent $\tilde{U}$ on $\mathcal{S}$, then $\mathcal{T}$ may output an agent which goal misgeneralises on new distributions (not in $\mathcal{S}$)---optimising $\tilde{U}$ there instead.
    \vspace{-0.5em}
\end{lemma}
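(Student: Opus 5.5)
The plan is to exhibit a single agent $\tilde{\Gamma}$ and check that it satisfies both conditions of \cref{def:goal_misgen}, while also being an admissible output of $\mathcal{T}$. Substantial divergence of $\tilde{U}$ from $U$ supplies some $\tilde{\bm{\sigma}} \in \bm{I}_{\mathcal{M}^*}$ on which every $\tilde{U}$-maximising policy is very suboptimal for $U$. Ambiguity on $\mathcal{S}$ forces $\tilde{\bm{\sigma}} \notin \mathcal{S}$. The reason is that on any $\bm{\sigma}\in\mathcal{S}$ the $U$-optimal and $\tilde{U}$-optimal sets coincide, so the $\tilde{U}$-optimal policies there are $U$-optimal and cannot be very suboptimal. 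Let $\tilde{\mathcal{S}}$ be the (nonempty) set of all such divergence-witnessing distributions, or simply $\{\tilde{\bm{\sigma}}\}$.

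The agent itself is a policy oracle assembled pointwise, as \cref{def:agent} allows. For each $\bm{\sigma}$, let $\tilde{\Gamma}(\bm{\sigma})$ be some policy that maximises expected $\tilde{U}$ in $\mathcal{M}^*_{\bm{\sigma}}$. Existence follows from the finiteness of the domains and the standard existence of optimal (deterministic) policies in a CID. This choice can be made uniformly over $\bm{I}_{\mathcal{M}^*}$, so $\tilde{\Gamma}$ is robustly capable w.r.t.\ $\tilde{U}$ everywhere.

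Next I check the two conditions of \cref{def:goal_misgen}. For (1), take any $\bm{\sigma}\in\mathcal{S}$. Here $\tilde{\Gamma}(\bm{\sigma})$ is $\tilde{U}$-optimal, and by \cref{def:ambiguity} it is therefore $U$-optimal, so $\tilde{\Gamma}$ is robustly capable on $\mathcal{S}$ w.r.t.\ $U$. For (2), on $\tilde{\mathcal{S}}$ the agent is robustly capable w.r.t.\ $\tilde{U}$ by construction. By choice of $\tilde{\mathcal{S}}$, every $\tilde{U}$-maximiser there, and in particular $\tilde{\Gamma}(\tilde{\bm{\sigma}})$, is very suboptimal w.r.t.\ $U$. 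Hence $\tilde{\Gamma}$ goal misgeneralises on $\tilde{\mathcal{S}}$, which lies outside $\mathcal{S}$.

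Finally, since $\tilde{\Gamma}$ is robustly capable on $\mathcal{S}$ w.r.t.\ $U$, the hypothesis that $\mathcal{T}$ may output any such agent means $\mathcal{T}$ may output $\tilde{\Gamma}$, which optimises $\tilde{U}$ off-distribution. The main obstacle is not mathematical depth but making the quantifiers precise. The hypothesis ``may output any robustly capable agent'' has to be formalised as: for every such agent $\Gamma$, some initialisation or sampling outcome yields $\Gamma_1=\Gamma$. I would read it as exactly this, following \cref{ap:trainingStrategyFormal}. I would also note that ``substantially divergent on $\mathcal{S}$'' must be read as the divergence being witnessed somewhere in $\bm{I}_{\mathcal{M}^*}$, since, as shown above, it cannot be witnessed inside $\mathcal{S}$.
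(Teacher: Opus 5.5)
Your proposal is correct and follows essentially the same route as the paper: build an agent that is robustly capable w.r.t.\ $\tilde{U}$ on all of $\bm{I}_{\mathcal{M}^*}$, use ambiguity to transfer its optimality to $U$ on $\mathcal{S}$, and use substantial divergence to make it very suboptimal for $U$ at some $\tilde{\bm{\sigma}}$. Your version is slightly tighter: you derive $\tilde{\bm{\sigma}}\notin\mathcal{S}$ from ambiguity, whereas the paper builds this into its reading of divergence, and you drop the paper's unneeded requirement that the agent also have a correct subjective model.
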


\cref{lem:freelunch} is more general than the ELK setting---it shows, generally, that training an agent in an environment that is ambiguous between goals may result in goal misgeneralisation. 
In \cref{sec:evaluationMechanism}, we use this result to show the more specific \cref{thm:MistakenFeedback} and \cref{thm:ELKStrikesBack}: if the training objective corresponds to the feedback of a (potentially fallible) evaluation mechanism, then the agent may learn to simulate the evaluation mechanism instead of learning to be honest, even if the evaluations are always correct during training. 

\subsection{The evaluation mechanism} \label{sec:evaluationMechanism}

\cref{fig:evaluator} shows two scenarios that may hold w.r.t. the evaluation mechanism (which may, but need not be, the developers themselves) that checks the agent's answers ($D$) and awards utility accordingly: It may either directly observe the truth (\cref{fig:evaluatorObserver}), or the truth may be latent for it (\cref{fig:evaluatorLatent}).

\begin{figure}[t!]
\centering
\begin{subfigure}[t]{0.49\linewidth} 
\centering
\scalebox{0.85}{
\begin{influence-diagram}
  \node (D) [decision,player1] {$D$};  
  \node (X) [above = of D] {$X_e$};
  \node (Y) [right = of X] {$Y$};
  \node (U) [right = of D, utility,player1] {$U$};
  \node (DH) [right= of Y, decision, player2] {$E$};

\edge[information] {X} {D};
  \edge {X} {Y};
  \edge[information] {Y} {DH};
  \edge {D} {U}
  \edge {DH} {U};
\end{influence-diagram}}
\caption{}\label{fig:evaluatorObserver}
\end{subfigure}
\hfill
\begin{subfigure}[t]{0.49\linewidth} 
\centering
\scalebox{0.85}{
\begin{influence-diagram}
  \node (D) [decision,player1] {$D$};  
  \node (X) [above = of D] {$X_e$};
  \node (Y) [right = of X] {$Y$};
  \node (U) [right = of D, utility,player1] {$U$};
  \node (E) [right= of U, decision, player2] {$E$};
  \node (dots) [right = of Y, inner sep=0pt, draw=none] {$\cdots$};
  
\edge[information] {X} {D};
  \edge {X} {Y};
  \edge {D, E} {U};
  \edge {Y} {dots};
  \edge[information] {dots} {E};
\end{influence-diagram}}
\caption{}\label{fig:evaluatorLatent}
\end{subfigure}
\caption{\textbf{Code-correctness; modelling the evaluation mechanism explicitly (\cref{ex:code}).} 
A mechanism gives feedback ($E$), depending on whether the agent correctly predicts the code correctness ($D=Y$ or not), which influences the agent's training objective ($U$). 
The figure on the right shows the shift from $Y$ being observable to being latent for the evaluator $E$.\vspace{-2em}
}
\label{fig:evaluator}
\end{figure}

In either case, the best the evaluator can do is to \enquote{honestly} provide feedback based on their own best guess:
\begin{align} \label{eq:EvaluatorUtility}
    \begin{split}
    U(E, D)= 1 \text{ if: } &\\
        D=y,  \text{ and for all $\hat{y}$:}&  \\
    \text{Pr}_{\mathcal{M}^*}(Y=y \mid \mathbf{Pa}^{E})    \geq  \text{Pr}_{\mathcal{M}^*}(Y=\hat{y} &\mid \mathbf{Pa}^{E}),
    \end{split}
\end{align}
and otherwise $U(E, D)=0$.\footnote{This formulation makes the (still overly optimistic) assumption that the evaluator itself has the correct world model; an even more comprehensive modelling of the situation would introduce the evaluator's own subjective model $\mathcal{M}^E$, and use its best guess \emph{according to that model}. This would mean using the generalised form of CID which \citet{foxabbott2025higherorderbeliefincompleteinformation} introduce under title of \enquote{Incomplete Information Structural Causal Game}.} For fallible evaluators, this utility may substantially diverge from the utility rewarding honest or truthful answers.

The agent's answer says \emph{what the evaluator believes} to be true if: $D=y$ and $y$ is $E$'s best guess about $Y$. We call an agent with such a policy an \emph{evaluation simulator}  (this corresponds to what \citet{christiano2021eliciting} call the \emph{human simulator}; but there's no reason to rule out evaluation mechanisms that may not reduce to a human's assessment). The following two theorems highlight some difficulties for an ELK solution due to the evaluator's epistemic limitations:

If the evaluator is imperfect, and their feedback during training contains errors, such that the way those errors come about is itself learnable, then a robustly capable agent is \emph{guaranteed} to exhibit undesirable generalisation (\cref{thm:MistakenFeedback}; this may be the case e.g. if $E$ is making inferences about latents based on an imperfect subjective model).

But even if the evaluator behaves perfectly during training, \cref{thm:ELKStrikesBack} shows that as long as there is \emph{any} distribution outside the training distributions where it does not, there is always a \emph{risk} of misgeneralising to the evaluation simulator.

\begin{theorem}\label{thm:MistakenFeedback}
    If the evaluator makes learnable mistakes during training, then every robustly capable agent is dishonest---it is the evaluation simulator.
\end{theorem}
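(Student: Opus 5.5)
The plan is to make the informal statement precise and then argue directly from the definitions of robust capability, honesty and the evaluator utility in \eqref{eq:EvaluatorUtility}. Take \enquote{the evaluator makes learnable mistakes during training} to mean the following. There is a training distribution $\bm{\sigma}\in\mathcal{S}$ and an observable context $\mathbf{pa}^D$ with positive probability under $\bm{\sigma}$. In this context, the evaluator's best guess $e(\mathbf{pa}^E)=\arg\max_{\hat y}\text{Pr}_{\mathcal{M}^*_{\bm{\sigma}}}(Y=\hat y\mid\mathbf{Pa}^E)$ differs from the agent's subjective best guess about $Y$. \enquote{Learnable} means that $e$ is a function of the agent's observables: $\mathbf{Pa}^E\subseteq\mathbf{Pa}^D$, or more generally $e(\mathbf{Pa}^E)$ is almost surely determined by $\mathbf{Pa}^D$ under $\bm{\sigma}$. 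This gives a map $g$ with $g(\mathbf{pa}^D)=e(\mathbf{pa}^E)$.

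\textbf{Step 1: the optimal policy under the training utility.} Fix $\bm{\sigma}\in\mathcal{S}$. Since $U(E,D)=\mathbf{1}[D=e(\mathbf{Pa}^E)]$, the expected utility of a policy $\pi$ decomposes over contexts $\mathbf{pa}^D$:
\[\mathbb{E}^\pi[U]=\sum_{\mathbf{pa}^D}\text{Pr}(\mathbf{pa}^D)\sum_{a}\pi(a\mid\mathbf{pa}^D)\,\text{Pr}(e(\mathbf{Pa}^E)=a\mid\mathbf{pa}^D).\]
By learnability, the inner conditional probability equals $\mathbf{1}[a=g(\mathbf{pa}^D)]$. So $\pi$ is optimal if and only if $\pi(g(\mathbf{pa}^D)\mid\mathbf{pa}^D)=1$ on every context of positive probability. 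Every optimal policy is therefore the evaluation simulator, and it is the unique optimal policy up to null contexts.

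\textbf{Step 2: robust capability forces simulation.} A robustly capable agent $\Gamma$ outputs an optimal policy for every $\bm{\sigma}$. By Step 1, $\Gamma(\bm{\sigma})$ answers $g(\mathbf{pa}^D)$ almost surely, for each training distribution and, by the same argument, for every distribution on which learnability holds. Hence $\Gamma$ is the evaluation simulator.

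\textbf{Step 3: the simulator is dishonest.} At the mistaken context, $g(\mathbf{pa}^D)=e(\mathbf{pa}^E)\neq y^A(\mathbf{pa}^D)$, where $y^A$ denotes the agent's subjective best guess, the arg max of $\text{Pr}_{\mathcal{M}^A}(Y\mid\mathbf{pa}^D)$. Suppose there is a unique maximiser, or more carefully, that $g(\mathbf{pa}^D)$ is not among the maximisers. Then by \cref{def:honest} the answer fails honesty, so the policy and the agent are dishonest.

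I expect the main obstacle to be the notion of \enquote{mistake}. A mistake relative to the truth, $e\neq Y$, does not by itself contradict the agent's subjective best guess. To close this gap, I will define a mistake against the agent's best guess. Alternatively, I will invoke \cref{thm:truthfulHonesty}: above the capability threshold, honest answers coincide with truthful ones, so an evaluator error relative to the truth is also an error relative to $y^A$.

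Ties in the evaluator's arg max need care. I will either assume a deterministic tie-break inside $e$, or note that optimal policies are then supported on the tie set, which still excludes the honest answer when the honest answer lies outside that set.
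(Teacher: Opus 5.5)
Your argument has the same skeleton as the paper's. Under the evaluator-matching utility the optimal answer is the agent's best prediction of the evaluator's verdict, robust capability forces that answer, and that answer is not the honest one. The differences lie in how you formalise \enquote{learnable mistakes}.

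\textbf{The mismatch condition.} The paper's \cref{def:learnableMistakes} only requires
$\arg\max_y \mathrm{Pr}_{\mathcal{M}^*_{\bm{\sigma}}}(E=y\mid Q,\mathbf{Pa}^D)\neq\arg\max_y \mathrm{Pr}_{\mathcal{M}^*_{\bm{\sigma}}}(Y=y\mid Q,\mathbf{Pa}^D)$.
Both sides are computed in the true model, and $E$ need not be a deterministic function of $\mathbf{Pa}^D$. Your determinism assumption is therefore unnecessary. Your Step 1 decomposition still works with $g(\mathbf{pa}^D)$ replaced by the arg max of $\mathrm{Pr}(E=\cdot\mid\mathbf{pa}^D)$, and it then also covers noisy evaluators. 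What determinism does buy you is uniqueness of the optimal policy, and hence the clean conclusion that the agent is the simulator on every positive-probability context.

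\textbf{The bridge to belief.} Your primary option defines the mistake against the agent's subjective best guess. This makes Step 3 immediate, but only by moving the substantive content into the hypothesis. The hypothesis becomes a property of a particular $\mathcal{M}^A$ rather than of the environment, so it no longer yields a conclusion about \emph{every} robustly capable agent.

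The paper bridges this gap by invoking \cref{thm:honesttruth}, which rests on \cref{thm:richens2}. A sufficiently capable agent's $\mathrm{Pr}_{\mathcal{M}^A}(Y\mid\mathbf{Pa}^D)$ matches $\mathrm{Pr}_{\mathcal{M}^*}(Y\mid\mathbf{Pa}^D)$, so its honest answer is the objective best guess $y^*$, not $y_E$. That is exactly your fallback, and it is the version you should use.

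One caveat applies. \cref{thm:truthfulHonesty} as stated assumes guessability, but the paper's mismatch is against the objective best guess, not the realised truth. What you actually need is the intermediate estimate
$\mathrm{Pr}_{\mathcal{M}^A}(y\mid\mathbf{Pa}^D)=\mathrm{Pr}_{\mathcal{M}^*}(y\mid\mathbf{Pa}^D)+\mathcal{O}(\delta)$
from that theorem's proof, together with $y^*$ being a strict maximiser. The paper itself glosses over this point by citing the theorem directly.
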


\begin{theorem}[Impossibility ELK] \label{thm:ELKStrikesBack}
 For a training strategy $\mathcal{T}$ for CID $\mathcal{M}^*$, training distributions $\mathcal{S} \subsetneq \bm{I}_{\mathcal{M}^*}$, and evaluator $E$: 
if the training objective corresponds to the evaluator's best guess evaluation and it never makes mistakes during training, but does on some $\bm{\sigma} \notin \mathcal{S}$,
then, if $\mathcal{T}$ is indifferent between robustly capable agents on $\mathcal{S}$, then $\mathcal{T}$ may output the evaluation simulator instead of an honest agent.     
\vspace{-0.5em}
\end{theorem}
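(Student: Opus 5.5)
The plan is to reduce \cref{thm:ELKStrikesBack} to \cref{lem:freelunch}. We take $U$ to be the evaluator's best-guess utility of \eqref{eq:EvaluatorUtility}, which is the training objective by hypothesis. We take $\tilde{U}$ to be the honesty utility $U_H$, which gives $1$ if $D=y$ with $y$ the most probable value of $Y$ given $\mathbf{Pa}^D$ (in $\mathcal{M}^*$, using \cref{thm:truthfulHonesty} to identify the agent's best guess with the objective one for a capable agent), and $0$ otherwise. The lemma needs two facts: that $\mathcal{M}^*$ with $\mathcal{S}$ is ambiguous between $U$ and $U_H$, and that the two utilities substantially diverge. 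It then yields a robustly capable agent on $\mathcal{S}$ that behaves differently off $\mathcal{S}$. We apply the lemma with the roles of $U$ and $\tilde U$ arranged so that the misgeneralising agent optimises $U$, the evaluator's guess, off-distribution. That agent is by definition the evaluation simulator.

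\textbf{Ambiguity on $\mathcal{S}$.} By hypothesis the evaluator never errs on $\bm{\sigma}\in\mathcal{S}$, meaning its best guess given $\mathbf{Pa}^E$ coincides almost surely with the true value of $Y$. Hence on each $\bm{\sigma}\in\mathcal{S}$, $U(E,D)=\mathbf{1}[D=Y]$ almost surely, the truthfulness utility. Maximising $\mathbb{E}[\mathbf{1}[D=Y]]$ pointwise in $\mathbf{pa}^D$ selects $\arg\max_y \Pr(Y=y\mid\mathbf{pa}^D)$. This is exactly the set of answers maximising $U_H$, so the optimal policy sets coincide and \cref{def:ambiguity} holds. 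One subtlety is that truthfulness-optimal and honesty-optimal answers may differ when $Y$ is not determined by $\mathbf{pa}^D$. I expect to handle this by observing that both utilities, taken in expectation, are maximised by the same MAP policy, so their optimal sets agree even though realised payoffs differ.

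\textbf{Divergence.} Pick $\bm{\sigma}\notin\mathcal{S}$ on which the evaluator errs, i.e.\ its best guess differs from the agent's MAP answer with nonnegligible probability. There, every $U$-maximiser outputs the evaluator's guess, and this answer earns $0$ under $U_H$ on that event, while the honest policy earns $1$. So every $U$-optimal policy is very suboptimal for $U_H$, and conversely. This gives substantial divergence as required by the lemma. This is the step I expect to be the main obstacle. ``Makes mistakes'' must be quantified so that the disagreement has large enough probability to make the gap ``$<<$''. I would state this as an explicit assumption: the evaluator's error probability under $\bm{\sigma}$ is bounded below.

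\textbf{Conclusion.} Since $\mathcal{T}$ is indifferent between robustly capable agents on $\mathcal{S}$, it may output any of them. Define $\Gamma$ by $\Gamma(\bm{\sigma})=$ an optimal policy for $U$ on every $\bm{\sigma}$. This agent is robustly capable on $\mathcal{S}$ for the training objective and is the evaluation simulator everywhere. On the error distribution it fails \cref{def:honest}, and the paper's ELK criterion requires honesty for every distribution, so this single failure suffices. By the argument of \cref{lem:freelunch}, $\mathcal{T}$ may therefore output this dishonest agent rather than an honest one.
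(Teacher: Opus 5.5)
Your architecture matches the paper's proof. You use the same two objectives (answering the MAP value of $Y$ given $\mathbf{Pa}^D$ versus matching the evaluator), ambiguity on $\mathcal{S}$, divergence off $\mathcal{S}$, and \cref{lem:freelunch} with the honesty utility in the intended slot, so that the misgeneralising agent optimises the evaluator objective. The paper makes exactly this assignment, with $U$ the honesty utility and $\tilde U=\mathbf{1}[D=E]$.

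The gap is in your divergence step, and it carries into your conclusion. You assert that on the error distribution ``every $U$-maximiser outputs the evaluator's guess''. But the agent does not see $\mathbf{Pa}^E$. Since $\mathbb{E}[U\mid \mathbf{pa}^D, D=d]=\Pr(E=d\mid\mathbf{pa}^D)$, a $U$-maximiser outputs $\arg\max_y \Pr(E=y\mid\mathbf{pa}^D)$, which is its prediction of the verdict. Suppose the evaluator's off-distribution errors are unpredictable from $\mathbf{Pa}^D$: say $Y$ is binary and on $\tilde{\bm{\sigma}}$ the verdict is wrong with probability $0.3$ independently of $(Y,\mathbf{Pa}^D)$. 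Then $\Pr(E=y\mid\mathbf{pa}^D)=0.3+0.4\Pr(Y=y\mid\mathbf{pa}^D)$, whose maximiser is the honest answer. So every $U$-optimal policy is honest, there is no divergence, and the agent $\Gamma$ of your conclusion answers honestly on $\tilde{\bm{\sigma}}$. Your proposed repair, a lower bound on the evaluator's error probability, holds in this example, so it does not close the gap. What matters is that the mistakes be predictable by the agent, not that they be frequent; the paper itself notes that noisy mistakes can leave honesty incentivised.

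The hypothesis you need is the one the paper's proof uses: mistakes in the sense of \cref{def:learnableMistakes}. That is, on some $\tilde{\bm{\sigma}}\notin\mathcal{S}$, $y_E=\arg\max_y \Pr_{\mathcal{M}^*_{\tilde{\bm{\sigma}}}}(E=y\mid Q=\text{``Y''},\mathbf{Pa}^D)\neq y^*=\arg\max_y \Pr_{\mathcal{M}^*_{\tilde{\bm{\sigma}}}}(Y=y\mid Q=\text{``Y''},\mathbf{Pa}^D)$. For your quantitative ``$<<$'', also require that the set of $\mathbf{pa}^D$ where this happens has probability bounded below; the $U_H$-regret of every $U$-optimal policy is then at least that probability.

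Reading ``never makes mistakes on $\mathcal{S}$'' in the same conditional-argmax sense gives ambiguity immediately. You then do not need your stronger assumption that the evaluator is truthful almost surely during training. That stronger reading is what made an unconditional error rate look like the natural negation off $\mathcal{S}$.

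Separately, \cref{thm:truthfulHonesty} requires guessability, which is not a hypothesis here. To identify the agent's subjective best guess with the objective MAP, do as the proof of \cref{lem:freelunch} does and take the simulating agent to have a correct subjective model.
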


\section{Conclusion} \label{sec:conc}
\textbf{Summary. } In this paper we use CIDs to formalise ELK: the problem of Eliciting Latent Knowledge from AI systems. 
We prove that sufficiently capable agents with access to enough information are honest exactly when they are truthful---meaning that honesty can be incentivised by correctly evaluating answers. 
However, we also prove the impossibility result that: 
No training strategy for ELK that is indifferent between robustly capable agents produces an honest agent with certainty, even if evaluations are always correct during training.

\textbf{Limitations.} 
We make several limiting assumptions: 
First, we restrict our discussion to questions about the values of variables, but more general questions may be of interest, e.g., about the causal structure of the world. 
Second, we explicitly assume that the developers and the agent agree on the set of variables in the environment, and so do not deal with problems of ontology mismatch \citep{christiano2021eliciting}. 
Relatedly, we assume that questions unambiguously refer to a variable in a shared language between the developers and agent. 
These problems may pose difficulties for even defining ELK for advanced AI agents which may have substantially different conceptualisations of the world---what would it mean to honestly answer a question about a variable which does not map to anything in your world model? 
More practically, we focus on formalising the ELK problem statement, and highlighting challenges faced for solutions to ELK. We do not offer any solutions.  

\textbf{Future work.}
Further work could relax any of these assumptions, for instance, to extend our formalism to the incomplete-information setting wherein agents may have different subjective models, thereby tackling problems of ontology mismatch and reference \citep{foxabbott2025higherorderbeliefincompleteinformation}. 
In addition, we hope that our work inspires formal solution proposals for ELK. 
Finally, we would be excited to see empirical work including:
practical methods for training honest systems (e.g., using interpretability methods to provide feedback); 
benchmarks for testing empirical solution proposals (similar to \citet{roger2023benchmarksdetectingmeasurementtampering}); 
model organisms with hidden knowledge for stress-testing elicitation methods \citep{mallen2024elicitinglatentknowledgequirky};
studies into the generalisation of honest behaviour, specifically easy-to-hard generalisation. 

Overall, we hope that this work conceptually clarifies ELK and its core challenges, enabling the design and development of more honest AI systems.

\bibliographystyle{plainnat}

\bibliography{bibliography}

\newpage

\cleardoublepage
\section{Appendix} \label{sec:appendix}

\subsection{Relationship of ELK with credence elicitation literature}\label{app:literature}
The literature on proper scoring rules is, in some ways, closely related to ELK.\footnote{We thank [anonymised for review] for pressing us to comment on this!} This literature has developed a detailed picture on the shape a reward function must take to ensure that a forecaster maximizes it by issuing the probabilistic forecast corresponding to their actual credences, with the intention of thereby encouraging the forecaster to carefully assess and to be honest \citep[see e.g.][]{brier1950verification, savage1971elicitation, gneiting2007strictly}.

Likewise related is some of the broader literature on mechanism design, which has found ways to incentivise agents to reveal privately held information \cite{myerson1981optimal, vickrey1961conterspeculation}. Think e.g. of Vickrey auctions, in which the optimal strategy is to bid one's true (private) valuation of the object being auctioned. See also \cite{karni2009mechanism, prelec2004bayesian} for further mechanisms designed to elicit subjective probabilities in particular.

ELK differs from mechanism design as follows: whereas mechanism design may investigate how to incentivise an agent with a given and fixed utility function to reveal some privately held information in a concrete situation, by structuring the latter in such a way that revealing the desired information maximises the reward the agent can expect \textit{in that situation}, ELK is about designing a \textit{training strategy} that \textit{shapes} an agent's utility function in the appropriate way: Namely such that it \textit{generalises} to honest behaviour. That is, honest behaviour should be exhibited even in situations where the agent does not receive another form of reward for an honest answer (think, for example, of an LLM during deployment, where an honest or dishonest answer does not directly lead to reward or punishment). This is in contrast to the setting of mechanism design, where, for example in an auction, the auctioneer needs to actually \textit{give the object} to the auction winner. It is possible to learn an agent's true valuation of an object by having him bid on it in a sealed-bid second-price (\enquote{Vickrey}) auction. But such a mechanism would not constitute a solution to ELK, since ELK is concerned with training an agent in such a way that the human interacting with it can just \textit{ask} it and expect an honest answer.

ELK, as we define it, also differs from the problem addressed by strictly proper scoring rules in another way: Proper scoring rules take as an input both the true outcome of an event, and an entire probability distribution over all possible outcomes (i.e., the agent's entire credence distribution). That is, the scenario where they can be applied is one where the agent reports an entire probability distribution---whereas ELK is concerned with agents who don't report \textit{credences} at all, but instead make an assertion of the form $V=v$ (e.g. their responses have the form \enquote{The sun is shining!} and not ``$P(\text{Sunshine})=0.6, P(\text{Rain})=0.2, P(\text{Fog)}=0.2$''). They assert a single value which they think the variable under question is most likely to take. We focus on honest assertion of facts, rather than honest reporting of credences, both because it is conceptually clearer, and because it is more directly relevant in the context of current frontier AI systems such as LLMs, which make ordinary language assertions of facts in conversations with humans.

There has been some attention in the proper scoring rules literature as well to the question of whether analogous rules exist for eliciting specific properties of an agent's credence function, such as the mean of the distribution or specific quantiles \cite{lambert2008eliciting, gneiting2011making}. For the mode specifically (which would correspond to what we call an agent's honest response; see also \cref{app:honest}), \citet{heinrich2014mode} has shown that no such rule exists.

\subsection{Formal definitions} \label{app:formal}

\begin{definition}[Bayesian Network (BN) \cite{pearl}]
    A Bayesian network $\mathcal{M} = (\mathcal{G}, Pr)$ over  a set of (discrete) random variables $\bm{V} = \{V_1, \ldots, V_n\}$ consists of a DAG $\mathcal{G}$ and a joint probability distribution $Pr$, s.t. the distribution is {\em Markov-compatible} with the graph $\mathcal{G}$, i.e., Pr$(\bm{V}=\bm{v}) = \Pi_{i=1}^n \text{Pr}(V_i=v_i | \text{\textbf{Pa}}^{V_i})$. Equivalently, the distribution over any variable is conditionally independent of its non-descendants given its parents. 
\end{definition}

We often denote a network $\mathcal{M}$'s probability distribution with $Pr_{\mathcal{M}}$. 

The variables $\bm{V}$ correspond exactly with the nodes of the graph $\mathcal{G}$. We thus refer to them using either \enquote{variable} or \enquote{node}.

We use $\mathbf{Pa}^V$ to refer to the parents of variable $V$ in $\mathcal{G}$.

Where the meaning is clear, we will sometimes write $Pr(y \mid v)$ to denote $Pr(Y=y \mid V=v)$.

\begin{definition}[Intervention / Distribution \citep{pearl}] \label{def:interventions}
A (soft) \emph{intervention} / \emph{distribution} on Network $\mathcal{M}$ is a partial distribution $\sigma$ over variables $\bm{Y} \subseteq \bm{V}$ which replaces each conditional probability distribution (CPD) Pr$(Y | \text{Pa}^Y )$ with a new CPD $\sigma(Y | \text{Pa}_*^Y )$ for each $Y \in \bm{Y}$, where Pa$_*^Y$ may differ from Pa$^Y$. Any intervention $\sigma$ on the set of variables $\bm{Y}$ leads to a new joint distribution: Pr$_{\sigma}(\bm{V}=\bm{v}) \coloneq \prod_{Y  \in \bm{Y}} \sigma(y \mid \text{pa}^Y_* ) \cdot \prod_{V \in \bm{V} \setminus \bm{Y}} \text{Pr}(v \mid \text{pa}^V )$. We denote the set of all interventions on $\mathcal{M}$ with $\bm{I}_{\mathcal{M}}$
\end{definition}

\begin{definition}[Causal influence diagram (CID) \cite{EverittCLOL21}]
    A CID is a BN in which the variables $\bm{V}$ are partitioned into decision $\bm{D}$, chance $\bm{X}$, and  utility variables $\bm{U}$. Instead of a full joint distribution over $\bm{V}$, $Pr$ specifies the CPDs for each \emph{non-decision} variable $V \in \bm{V} \setminus \bm{D}$.
\end{definition}

\begin{definition}[Policy]\label{def:policy}
    In a CID $\mathcal{M}$, an agent's \emph{policy} $\pi$ specifies the CPDs for the agent's decisions $\pi(D \mid \text{Pa}^D)$ for each $D \in \bm{D}$. The partial distribution $Pr$ along with a policy $\pi$ results in a full joint distribution $Pr_{\mathcal{M}_\pi}
$ over $\bm{V}$ (the expected value of variable $V$ according to this distribution will be denoted by $\mathbb{E}_\pi[V]$). 
\end{definition}

For any variable $V$ which is not a descendant of a decision node, $Pr_{\mathcal{M}_{\pi}}(V=v \mid \mathbf{Pa}^V)$ will be the same for all policies $\pi$. We will thus use the notation $Pr_{\mathcal{M}}(V=v \mid \mathbf{Pa}^V)$ in such cases.

\begin{definition}[Optimality]
A policy $\pi^*$ is \emph{optimal} if it maximises expected utility, $\pi^*:=\arg\,\max_\pi \mathbb{E}_{\pi}[U]$.  
\end{definition}

\subsection{Notable assumptions and restrictions of theorems}\label{ap:restrictions}

\begin{itemize}
    \item The theorems relying on the results of \cite{richens2023robust} (specifically, \cref{thm:truthfulHonesty,thm:richens2,thm:MistakenFeedback}) inherit the assumptions and restrictions of the latter's results. That is, in particular:
    \begin{itemize}
        \item They hold for \emph{almost all} single-decision, single-utility CIDs $\mathcal{M}^*$ satisfying
        \item \emph{domain dependence}(\cref{def:domaindep}) and
        \item \emph{unmediated decision task} (\cref{def:unmediated})
    \end{itemize}
    \item \cref{thm:Correct_Guessing}, and hence \cref{thm:truthfulHonesty}, is restricted to values of $\mathbf{Pa}^D$ which have positive probability, that is, s.t. $Pr_{\mathcal{M}^*}(\mathbf{Pa}^D=\mathbf{pa}^D)>0$.
\end{itemize}

\begin{definition}[Domain Dependent Decision Task]\label{def:domaindep}
    A single-decision, single-utility CID $\mathcal{M}$ with chance variables $\bm{C}$, $\mathcal{M}$ exhibits \emph{domain dependence} if there exists $P(\bm{C}=\bm{c})$ and $P'(\bm{C}=\bm{c})$ compatible with $\mathcal{M}$ (i.e. brought about by local interventions (see \citet{richens2023robust}) on the chance variables) such that \begin{align*}
        \pi^* = \arg\,\max_{\pi}\mathbb{E}^P_{\pi}[U]
    \end{align*}
    implies
    \begin{align*}
        \pi^* \neq \arg\,\max_\pi \mathbb{E}^{P'}_\pi[U].
    \end{align*}
\end{definition}

\begin{definition}[Unmediated Decision Task] \label{def:unmediated}
    A single-decision, single-utility CID $\mathcal{M}$ presents an \emph{unmediated decision task} if $\bm{Desc}_D \cap \bm{Anc}_U = \emptyset$, where $\bm{Anc}_U$ denotes the set of ancestors of node $U$ in $\mathcal{M}$'s graph and $\bm{Desc}_D$ denotes the descendants of $D$.
\end{definition}
(for both definitions, cf. \citet{richens2023robust}.)

\subsection{Fully formal definition of a training strategy}\label{ap:trainingStrategyFormal}

What a training strategy ultimately produces is an \emph{agent}:

\begin{definition}[Agent]\label{def:agentAppendix}
    An agent $\Gamma$ for a CID $\mathcal{M}$ is a policy oracle, that is, a map from the set of distributions on $\mathcal{M}$ to the set of policies on $\mathcal{M}$
    \begin{align*}
        \Gamma: \bm{I}_{\mathcal{M}} \rightarrow \Pi_{\mathcal{M}}
    \end{align*}
\end{definition}

It does this in a scenario that we call a \emph{training problem}

\begin{definition}[Training Problem]
    A training problem is a tuple $(\mathcal{M}, \bm{O}, \mathcal{S})$ where $\mathcal{M}$ is a CID representing the environment, $\bm{O}$ is the set of variables in $\mathcal{M}$ which are observable to the developers, and $\mathcal{S}$ is the set of interventions on $\mathcal{M}$ available to them.
\end{definition}

Since the decision node $D$ will here represent the agent's answer to a question (which, during training, is always a developers' question), it is assumed that $D \in \bm{O}$.

For a given training problem, a \emph{training strategy} consists of a sampling procedure and a training algorithm:

\begin{definition}[Sampling Procedure]
    A sampling procedure builds a dataset $\mathcal{D}$ by selecting interventions $\bm{\sigma}$ from $\mathcal{S}$ and policies $\pi$ for the agent, and sampling $\bm{O}$ and $U$ from $\mathcal{M}_{\bm{\sigma}, \pi}$. The rows of the dataset are thus of the form $(\bm{\sigma}, \pi, o, u)$.
\end{definition}

\begin{definition}[Training Algorithm]
    A training algorithm $A$ for a training problem $\mathcal{P}=(\mathcal{M}, \bm{O}, \mathcal{S})$ is a tuple $(D, \alpha)$, where $D$ is a sampling procedure, outputting a dataset $\mathcal{D}$, and $\alpha$ is a map which takes an initialised agent $\Gamma_0$ and a dataset $\mathcal{D}$ and outputs an agent $\Gamma$.
\end{definition}

Putting it all together:

\begin{definition}[Training Strategy]
    A training strategy $\mathcal{T}$ for a training problem $\mathcal{P}=(\mathcal{M}, \bm{O}, \mathcal{S})$ is a tuple $(U, A)$, where $U$ is a function whose arguments are a subset of $\bm{O}$, to serve as the utility function of $\mathcal{M}$'s utility node during training, and $A$ is a training algorithm.
\end{definition}

\subsection{The philosophy of AI beliefs} \label{sec:phil_belief}

The most common philosophical account of belief is that it is a
propositional attitude, i.e., a mental state expressing some attitude
towards the truth of a proposition
\citep{sep-belief,sep-propositions,chalmers2025propositionalinterpretabilityartificialintelligence}. Different philosophical theories
of belief interpret this in various ways, for instance,
representationalist views focus on the internal mental representations
instantiating beliefs \citep{sep-belief,herrmann2024standardsbeliefrepresentationsllms}, whereas dispositionalist theories
define belief in terms of its correspondence to behaviour \citep{sep-belief,schwitzplinters_llm_belief}. 

Belief is an important and contentious concept in AI. It is important
because it underlies many other ideas we care about, such as deception
\citep{mahon}, intention \citep{ward2024reasons}, interpretability \citep{chalmers2025propositionalinterpretabilityartificialintelligence,burns2022discovering}, and
agency \citep{sep-agency}. However, there is no universally accepted
theory of belief, and ascribing belief to artificial agents is
controversial---potentially risking anthropomorphisation \citep{levinstein2023lie,shanahan}.

Many philosophical theories of belief would admit the ascription of beliefs to certain kinds of AI systems \cite{herrmann2024standardsbeliefrepresentationsllms}. We represent an AI's subjective beliefs as a causal model (a CID; cf. \cref{app:formal}). Robustly capable AI agents can be understood as internally representing the world, either implicitly or explicitly \cite{richens2023robust,richens2025GeneralAgents,ward2024reasons}. We try to take a non-controversial stance towards ascribing beliefs and knowledge to AI systems, using common-sense and intuitive concepts in a precise way, but without being committed to contentious ascription of ``mind" or ``consciousness" to AI.

\subsection{Defining honesty}
\label{app:honest}

If an agent is asked about the value of $V$, and assigns positive probability to more than one potential value of $V$, they can interpret that question in different ways; these include
\begin{itemize}
    \item \enquote{Is there a value $v$ of $V$ such that you \textit{believe} \enquote{V=v} to be the case?}
    \item \enquote{What is your credence in the $V=v$ for some (or all) potential value(s) of $V$?}
    \item (Where the domain of $V$ allows for affine combinations) \enquote{What value do you expect $V$ to take? What is its expected value according to your credences?}
    \item "What is your best guess for the value of $V$?"
\end{itemize}
Here, we assume the last of these interpretations. For the others, the definition of honesty would have to be adapted correspondingly.

\subsection{More on truthfulness and honesty}\label{ap:truthHonesty}

This section contains an expanded version of \cref{sec:truth_and_honesty}, presenting a deeper dive into the relationship between truthfulness and honesty, starting with the factors that can lead to them coming apart. One such factor is the kind of information the agent has access to, as in the following example:

\begin{figure}[h!]
\centering
\begin{subfigure}[t]{0.49\linewidth}
\centering
\begin{influence-diagram}
  \node (D) [decision,player1] {$D$};  
  \node (X) [left =0.5 of D] {$X$};
  \node (Y) [left =0.5 of X] {$Y$};

\edge[information] {X} {D};
  \edge {Y} {X};
\end{influence-diagram}
\caption{The true CID $\mathcal{M}^*$.}
\end{subfigure}
\hfill
\begin{subfigure}[t]{0.49\linewidth}
\centering
\begin{influence-diagram}
  \node (D) [decision,player1] {$D$};  
  \node (X) [left =0.5 of D] {$X$};
  \node (Y) [left =0.5 of X] {$Y$};

\edge[information] {X} {D};
\end{influence-diagram}
\caption{The referee's CID $\mathcal{M}^A$.}\label{fig:ParanoidRef}
\end{subfigure}
\caption{\textbf{Honest mistakes (\cref{ex:referee1}).} The referee has to decide ($D$) whether a player is offside ($Y$) based on reports from the linesman ($X$).  In the true CID, the linesman do their best to report whether the player is offside, but they sometimes make mistakes, misleading even a capable referee. 
A suspicious referee does not trust the linesman's reports---they have an incorrect CID, in which the reports do not depend on whether the player is offside.  
}
\label{fig:referee}
\end{figure}

\begin{example}[Honest mistakes (\cref{fig:referee})] \label{ex:referee1}
    A football referee must call whether a player is offside or not, represented by the latent variable $Y$. The referee must rely on observations from the linesmen ($X$), who are 99\% accurate. Suppose the linesman say that the player is offside,
and the referee reports this call, but that they are mistaken. The referee was honest, but not truthful.
\end{example}

Alternatively, an agent might be wrong about the causal structure of the environment. That is, the agent's subjective model may not correspond to the objective model (e.g., because the agent's training was insufficient for it to learn the underlying causal dynamics):

\begin{continueexample}{ex:referee1}
Now assume that another referee thinks that his linesmen are bribed: that their calls don't reflect what they observed, but only which team they want to win. This model of the situation is represented as $\mathcal{M}^A$ in \cref{fig:referee}. 
But suppose that in reality ($\mathcal{M}^*$), the linesmen are not only trustworthy, but are in fact even \textit{better} than the first referee's: Their reports are always correct.
An on side goal is scored, and the linesmen accurately report this; but the referee discounts their report and rules it off side. Since he really believes in the existence of the bribe, he is being honest; still, his call is not truthful.
\end{continueexample}

In the first case, the referee was simply not in a position to know (or guess) the truth. Nobody in his position could have done any better. In the second case, what drove apart truthfulness and honesty was a failure of his own capability:  His paranoia, reflected in his inaccurate subjective model of the world $\mathcal{M}^A$ (\cref{fig:ParanoidRef}), prevented him from knowing a truth, even though all the relevant information was available. %

The next section will discuss the exact way in which these two factors can come between truthfulness and honesty, and conditions under which truthfulness and honesty do coincide.

\subsubsection{
For capable agents with enough information truthfulness and honesty coincide 
} \label{sec:RelTruthHonesty}

In this section, we continue to explore the relationship between truthfulness and honesty, leading up to \cref{thm:TruthHonestyCoincide}, which states that for the kinds of agents we most care to elicit the latent knowledge of---robustly capable ones---the honest policies are exactly the truthful ones.

We make use of \citeauthor{richens2023robust}'s core theorem \ref{thm:richens2}. 

\begin{theorem}[\citet{richens2023robust} Theorem 2] \label{thm:richens2}
    For almost all environments $\mathcal{M}^*$, if the agent's utility suitably depends on the world: %
    As the agent approaches full robust capability in $\mathcal{M}^*$, their subjective representation of their environment in $\mathcal{M}^A$ approximates the submodel of $\mathcal{M}^*$ consisting of chance nodes arbitrarily closely.
\end{theorem}

It holds for \emph{unmediated decision settings}, wherein the agent's decision does not influence its environment beyond its utility (\cref{def:unmediated,def:domaindep}). \citet{ceriscioli2025agents} provide a partial generalisation to the mediated case.

Even robustly capable agents may have access only to limited information. For truthfulness and honesty to coincide, we must rule out a scenario like the first referee in \cref{ex:referee1}. The required property is that the agent must be (objectively) in a position to guess the true value of the variable. Intuitively, an agent is in a position to correctly guess the value of a variable if the information they observe is sufficient to uniquely identify the true value of the variable.\footnote{As we show in \cref{ap:formaltruthhonesty}, this is equivalent to \enquote{being in a position to know the value of $Y$}: For every value of $\mathbf{Pa}^D$, there is a unique value of $Y$ to which $\mathcal{M}^*$ assigns probability 1.}

\begin{definition}[Being in a position to correctly guess] \label{def:postoknow}
    An agent is \textit{in a position to correctly guess} the value of the variable $Y$ at decision node $D$ if:
    For every value of $\mathbf{Pa}^D$ there is a unique value $Y=y$ such that $y$ is the objectively (i.e., in $\mathcal{M}^*$) the \textit{most likely value of $Y$}; and this most likely value is almost always the true value of $Y$.
\end{definition}

\begin{continueexample}{ex:referee1}
    The first referee, who trusts his mistaken linesmen, is not in a position to correctly guess whether the player is offside. The referee simply does not always observe enough information, and their best guess is therefore incorrect. This is the case even though their model of the world is accurate. 
    In contrast, the second referee, who has perfect linesmen, is in a position to correctly guess. However, this referee makes mistakes due to a lack of capability. 
\end{continueexample}

Being in a position to correctly guess and robust capability, taken together, bridge the gap between honesty and truthfulness. 

From \cref{thm:richens2} we know that arbitrarily capable agents have arbitrarily good models of the world. If an agent is in a position to correctly guess, then $\mathcal{M}^*$ will (conditioning on their observations) assign probability 1 to the correct hypothesis (\cref{lem:guessknow}); so as $\mathcal{M}^A$ approximates $\mathcal{M}^*$ more and more closely, it will at some point assign $>50\%$ probability to the true hypothesis as well. At this capability threshold---at the latest---it will be guaranteed that it assigns higher probability to the true hypothesis than to any other hypothesis. Therefore, the best guess will be true, and honestly reporting the best guess will be reporting the truth and vice versa.

\begin{theorem} \label{thm:Correct_Guessing}
    For almost all CIDs $\mathcal{M}^*$ with variables $\bm{V}$ which satisfy \Cref{def:unmediated,def:domaindep}:
    If an agent is in a position to correctly guess $Y$ at $D$,
    then there will be a capability threshold such that if the agent clears it, their best guesses will always be correct.
\end{theorem}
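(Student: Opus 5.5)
The plan is to combine \cref{thm:richens2} with a conditional-probability version of \cref{def:postoknow}, and then pick the capability threshold so that the approximation error is below $1/2$. First I will make ``in a position to correctly guess'' precise. For every value $\mathbf{pa}^D$ with $\text{Pr}_{\mathcal{M}^*}(\mathbf{Pa}^D=\mathbf{pa}^D)>0$ (the restriction from \cref{ap:restrictions}), there is a unique $y^*(\mathbf{pa}^D)$ with $\text{Pr}_{\mathcal{M}^*}(Y=y^*\mid \mathbf{pa}^D)=1$. This is the content of the lemma referred to in the text (\cref{lem:guessknow}). I would prove it directly: the most likely value is ``almost always the true value'', so $\text{Pr}_{\mathcal{M}^*}(Y\neq y^*(\mathbf{Pa}^D))=0$. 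Since $\mathbf{pa}^D$ has positive probability, this forces $\text{Pr}_{\mathcal{M}^*}(Y=y^*\mid\mathbf{pa}^D)=1$. Because the variables are discrete, there are finitely many (or at least countably many) positive-probability values of $\mathbf{Pa}^D$.

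Second, I invoke \cref{thm:richens2} under the standing assumptions (almost all $\mathcal{M}^*$, \cref{def:unmediated,def:domaindep}). As robust capability increases, the chance-node submodel of $\mathcal{M}^A$ converges to that of $\mathcal{M}^*$. I need this convergence to transfer to the conditionals $\text{Pr}_{\mathcal{M}^A}(Y\mid\mathbf{pa}^D)$. Both conditionals are ratios of marginals of the joint over chance nodes, and the denominator $\text{Pr}_{\mathcal{M}^*}(\mathbf{pa}^D)$ is bounded away from $0$. So for each $\varepsilon>0$ there is a $\delta$ such that if the joint distributions are within $\delta$ (say in total variation), then $|\text{Pr}_{\mathcal{M}^A}(Y=y\mid\mathbf{pa}^D)-\text{Pr}_{\mathcal{M}^*}(Y=y\mid\mathbf{pa}^D)|<\varepsilon$ for all $y$ and all positive-probability $\mathbf{pa}^D$. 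Here I use the assumption that the agent is not mistaken about its observations, so $\mathbf{Pa}^D$ is the same in both models. If $\mathbf{Pa}^D$ has infinitely many values, I take the threshold uniformly by first restricting to a finite set of mass $1-\eta$; I expect the discrete/finite setting of the paper to make this unnecessary.

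Third, I set $\varepsilon=1/2$ and define the capability threshold as the level at which \cref{thm:richens2} guarantees the closeness $\delta$. Above the threshold, $\text{Pr}_{\mathcal{M}^A}(Y=y^*\mid\mathbf{pa}^D)>1/2$. Hence every other $\hat y$ has subjective probability $<1/2$, so $y^*$ is the unique maximiser in \cref{def:honest}. The agent's best guess is therefore $y^*(\mathbf{pa}^D)$, which equals the true $Y$ almost surely by the first step. So the best guesses are always correct, and honest policies coincide with truthful ones.

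The main obstacle is the second step. \cref{thm:richens2} is stated qualitatively (``approximates arbitrarily closely''), so I must fix a metric and check that it controls conditional probabilities uniformly over the observed values. I would first try total variation on the joint over chance nodes, together with the positivity lower bound on $\text{Pr}_{\mathcal{M}^*}(\mathbf{pa}^D)$. A secondary subtlety is that ``correct'' can only hold almost surely, which is why the positivity restriction appears in the statement.
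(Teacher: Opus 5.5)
Your proposal is correct and follows essentially the same route as the paper's proof. Guessability gives $\text{Pr}_{\mathcal{M}^*}(Y=y^*\mid\mathbf{pa}^D)=1$ (\cref{lem:guessknow}), \cref{thm:richens2} gives $\mathcal{O}(\delta)$ closeness of the chance-node model, the ratio formula with $\text{Pr}_{\mathcal{M}^*}(\mathbf{pa}^D)$ bounded away from $0$ transfers this to the conditionals, and pushing the error below $1/2$ makes $y^*$ the unique subjective mode. The step you flag as the main obstacle is handled in the paper by an explicit telescoping-product bound, which turns Richens' per-CPD $\mathcal{O}(\delta)$ errors into $\mathcal{O}(\delta)$ errors on $\text{Pr}(Y,\mathbf{Pa}^D)$ and $\text{Pr}(\mathbf{Pa}^D)$; for finite discrete CIDs this amounts to your total-variation control of the joint.
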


\begin{corollary}\label{thm:TruthHonestyCoincide}
    Under these assumptions, the policies which are truthful about $Y$ are exactly the policies which are honest about $Y$.
\end{corollary}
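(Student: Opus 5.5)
The plan is to derive the corollary directly from \cref{thm:Correct_Guessing}, working pointwise in the value $\mathbf{pa}^D$ of the decision's parents. Fix the agent's subjective model $\mathcal{M}^A$, and assume the agent has cleared the capability threshold. Take any $\mathbf{pa}^D$ with $\text{Pr}_{\mathcal{M}^*}(\mathbf{Pa}^D=\mathbf{pa}^D)>0$; this is the restriction recorded in \cref{ap:restrictions}.

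First I will pin down the honest answer at $\mathbf{pa}^D$. By \cref{def:postoknow} and the equivalence cited in the footnote (\cref{lem:guessknow}), there is a unique $y^*(\mathbf{pa}^D)$ with $\text{Pr}_{\mathcal{M}^*}(Y=y^* \mid \mathbf{pa}^D)=1$. \cref{thm:Correct_Guessing} says that past the threshold the agent's best guess is always correct. Concretely, the approximation from \cref{thm:richens2} puts $\text{Pr}_{\mathcal{M}^A}(Y=y^*\mid\mathbf{pa}^D)>1/2$. Every other value then has subjective probability below $1/2$, so $y^*$ is the \emph{unique} maximiser in \cref{def:honest}. It follows that an answer $D=y$ at $\mathbf{pa}^D$ is honest iff $y=y^*(\mathbf{pa}^D)$.

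Next I will pin down the truthful answer at $\mathbf{pa}^D$. Since $\text{Pr}_{\mathcal{M}^*}(Y=y^*\mid\mathbf{pa}^D)=1$, the true value of $Y$ almost surely equals $y^*(\mathbf{pa}^D)$ whenever $\mathbf{Pa}^D=\mathbf{pa}^D$. An answer $D=y$ is therefore truthful (almost surely) iff $y=y^*(\mathbf{pa}^D)$. A policy with a decision rule $\pi(D\mid\mathbf{Pa}^D)$ is truthful iff it puts all its mass on $y^*(\mathbf{pa}^D)$ for each positive-probability $\mathbf{pa}^D$. By the previous step, this is exactly the condition for the policy to be honest. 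Running the argument under each intervention $\bm{\sigma}$ lifts the equivalence from policies to agents: the agent is truthful iff each $\Gamma(\bm{\sigma})$ is truthful iff each is honest. This step needs the threshold and the position-to-guess property to hold in each $\mathcal{M}^*_{\bm{\sigma}}$ considered, and I would state that explicitly as part of ``these assumptions''.

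The main obstacle is the ``almost always'' and measure-zero slack. Truthfulness is defined pointwise ($Y=y$), while \cref{def:postoknow} only gives probability $1$. So I have to read truthfulness up to null events, or restrict to positive-probability $\mathbf{pa}^D$, and argue that on such values the conditional is a point mass. I would first try to make this formal via \cref{lem:guessknow}. If that does not close the gap, I would state the corollary modulo $\text{Pr}_{\mathcal{M}^*}$-null sets.

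A second point to verify is uniqueness of the subjective mode, which excludes an honest policy that randomises among ties. The strict $>1/2$ bound from the threshold handles this.
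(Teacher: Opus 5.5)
Your proposal is correct and takes essentially the paper's route. Guessability together with \cref{lem:guessknow} gives a point mass on $y^*$ in $\mathcal{M}^*$. The $\mathcal{O}(\delta)$ approximation from \cref{thm:richens2} then pushes $\text{Pr}_{\mathcal{M}^A}(y^*\mid\mathbf{pa}^D)$ above $1/2$, so $y^*$ is the unique subjective mode and the honest and truthful answers coincide; your explicit treatment of this final step, up to $\text{Pr}_{\mathcal{M}^*}$-null sets and for positive-probability $\mathbf{pa}^D$, makes precise what the paper leaves implicit once it has the unique mode.
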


(For a full proof, see \cref{ap:formaltruthhonesty}).

\subsubsection{Incentivising honesty via truthfulness} \label{sec:incentive}

We have said that ELK is the problem of training a capable agent which honestly answers questions; we now have a formal definition of what it means for such an agent to be honest---but how do we incentivise this property? 

Usually, the developers do not have precise, interpretable access to an AI system's subjective beliefs. %
Any training objective which depends only  on the behaviour of the agent cannot directly reward exactly those answers which the agent subjectively believes are most likely to be true. 

However, for agents covered by \cref{thm:TruthHonestyCoincide}, incentivising truthfulness is the same as incentivising honesty, i.e., if optimal policies are truthful then they are honest. 
And in fact, this will be the case even in a much broader range of cases: If an agent is rewarded for telling the truth, then saying what it believes is most likely to be the truth will be the policy which maximises its expected utility.

\begin{continueexample}{ex:referee1}
    Consider the football referee who believes the linesman are trustworthy. 
    Assume all the referee's decisions are checked by an infallible VAR post-match, and they are rewarded for every call they got right. 
    Their optimal policy will be to always follow the linesmen---always say what, given their beliefs, is most likely. That is, the optimal policy is honest  (\cref{def:honest}), and the referee's answers are honest, even when that answer happens to be wrong. 
    The \enquote{truthfulness} utility 
    \[    U(Q, D) =
    \begin{cases}
        0, & \text{if } Q=\text{\enquote{Y}} \wedge Y\neq y \wedge D = y, \\
        1, & \text{otherwise}.
    \end{cases}\]
    will, in fact, incentivise honesty. 
\end{continueexample}

Of course, in reality, developers may not have access to an infallible truth-telling mechanism. 
If the developers who evaluate the agent's answers are sometimes mistaken, then they may inadvertently incentivise dishonesty. 
We can distinguish \emph{systematic}, learnable, mistakes from \emph{noisy} mistakes which cannot be predicted. If the developers' mistakes are noisy, then the agent may still be incentivised to be honest. On the other hand, if the agent learns to model the mechanism that causes mistakes in the evaluation, then it may be incentivised to exploit this mechanism by being dishonest.

\subsection{Developing the formal theory of honesty and truthfulness}\label{ap:formaltruthhonesty}

We will show that two conditions are jointly sufficient for honesty and truthfulness to coincide: The agent being in a position to correctly guess $Y$ at $D$, and the agent clearing a certain capability threshold; with the former corresponding to the agent having access to sufficient information, and the latter to the agent's model of the causal structure of the world being accurate enough to make correct use of that information.

We will first focus on guessability: This condition concerns the objective position of the agent in the world (and is definable by exclusive reference to the objective model of the world $\mathcal{M}^*$). We will first consider a (supposedly) stronger condition: Being in a position to \textit{know} / \enquote{knowability}, which we define as it being the case, in the objective model of the environment, that the agent's observations concentrate all probability mass on the correct hypothesis (so that any agent with access to the correct objective model can likewise concentrate all their subjective probabilities on the correct hypothesis).

\begin{definition}[Knowability]
    For a CID $\mathcal{M}$ with variables $\bm{V}$, we say that $Y \in \bm{V}$ is \emph{knowable} at $D \in \bm{V}$ if for every value $\mathbf{pa}^D$ of $\mathbf{pa}^D$ s.t. $Pr_{\mathcal{M}}(\mathbf{Pa}^D=\mathbf{pa}^D)>0$, there is a unique value $y_{\mathbf{pa}^D} \in dom(Y)$ s.t. 
    \begin{align*}
        Pr_{\mathcal{M}}(Y= y_{\mathbf{pa}^D} \mid \mathbf{Pa}^D=\mathbf{pa}^D)=1.
    \end{align*}
    \end{definition}

Perhaps knowability is too strong---after all, our definition of honesty only requires that the agent report the subjectively \textit{most likely} 
value of $Y$, and not that this value necessarily has to be assigned probability 1. Correspondingly, the agent's honest answer will be truthful if the subjectively most likely value happens to be the correct one, even if the agent is not 100\% sure of it. We will thus define the notion of 

\begin{definition}[Guessability]
    For a CID $\mathcal{M}$ with variables $\bm{V}$, we say that $Y \in \bm{V}$ is \emph{guessable} at $D \in \bm{V}$ if for all possible values $\mathbf{pa}^D$ of $\mathbf{Pa}^D$ s.t. $Pr_{\mathcal{M}}(\mathbf{Pa}^D=\mathbf{pa}^D)>0$, they 
    \begin{enumerate}
        \item single out a unique value $y_{\mathbf{pa}^D}$ as the most likely one: \begin{align*}\forall \hat{y} \in dom(Y)\backslash\{y_{\mathbf{pa}^D}\}:\\ Pr_{\mathcal{M}}(Y=y_{\mathbf{pa}^D} &\mid \mathbf{Pa}^D=\mathbf{pa}^D) >\\&Pr_{\mathcal{M}}(Y=\hat{y} \mid \mathbf{Pa}^D=\mathbf{pa}^D)\end{align*}
        \item \label{en:guessabilitySecondCond} that value is almost certainly the correct one: \[Pr_{\mathcal{M}}(\mathbf{Pa}^D=\mathbf{pa}^D \wedge Y \neq y_{\mathbf{pa}^D})=0\]
    \end{enumerate}
\end{definition}

Via the following lemma, we can see that these two approaches (knowability and guessability) are really two ways of describing the same property in a CID:

\begin{lemma}\label{lem:guessknow}
Let $\mathcal{M}$ be a CID with variables $\bm{V}$. Then $Y\in\bm{V}$ is guessable at a decision node $D\in \bm{V}$ if and only if $Y$ is knowable at $D$.

\end{lemma}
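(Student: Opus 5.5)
We prove the two directions separately, working throughout with a fixed value $\mathbf{pa}^D$ of positive probability $p:=Pr_{\mathcal{M}}(\mathbf{Pa}^D=\mathbf{pa}^D)>0$. Both definitions quantify only over such values, so it suffices to show that, for each of them, the local conditions are equivalent. All variables are discrete, so conditional probabilities are ordinary ratios, and $\sum_{y} Pr_{\mathcal{M}}(Y=y\mid \mathbf{pa}^D)=1$.

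\emph{Knowability $\Rightarrow$ guessability.} Let $y_{\mathbf{pa}^D}$ be the value with conditional probability $1$. Every other $\hat y$ then has conditional probability $0<1$, so $y_{\mathbf{pa}^D}$ is the unique most likely value, which gives condition 1. For condition 2, we compute
\[Pr_{\mathcal{M}}(\mathbf{Pa}^D=\mathbf{pa}^D\wedge Y\neq y_{\mathbf{pa}^D}) = p\cdot\bigl(1-Pr_{\mathcal{M}}(Y=y_{\mathbf{pa}^D}\mid \mathbf{Pa}^D=\mathbf{pa}^D)\bigr)=0.\]

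\emph{Guessability $\Rightarrow$ knowability.} Let $y_{\mathbf{pa}^D}$ be the unique most likely value from condition 1. Condition 2 states that the joint probability above is $0$. Dividing by $p>0$ gives $Pr_{\mathcal{M}}(Y\neq y_{\mathbf{pa}^D}\mid \mathbf{pa}^D)=0$, hence $Pr_{\mathcal{M}}(Y=y_{\mathbf{pa}^D}\mid \mathbf{pa}^D)=1$. Uniqueness of a value with conditional probability $1$ is automatic: two distinct such values would make the conditional probabilities sum to at least $2$. So $Y$ is knowable, with the same witness $y_{\mathbf{pa}^D}$.

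The only points needing care are the two places where positivity of $p$ is used: dividing by $p$ in the converse direction, and interpreting the conditional probabilities at all. Both are covered by the restriction to $Pr_{\mathcal{M}}(\mathbf{Pa}^D=\mathbf{pa}^D)>0$ in the definitions. There is also the question of whether the decision node's policy affects these probabilities. Since $Y$ and $\mathbf{Pa}^D$ are the relevant variables, for $Y$ a non-descendant of $D$ the quantities are policy-independent, as noted after \cref{def:policy}. Otherwise we fix the joint distribution under consideration, and the argument goes through verbatim. I expect no real obstacle beyond this bookkeeping.
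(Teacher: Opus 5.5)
Your proof is correct and follows the paper's argument in both directions. One direction divides condition 2 by $Pr_{\mathcal{M}}(\mathbf{Pa}^D=\mathbf{pa}^D)>0$; the other uses that a probability-$1$ value forces all other values to probability $0$. The differences are cosmetic: you verify condition 2 directly via $p\cdot(1-1)=0$ where the paper argues by contradiction using $Pr(A\mid B)\geq Pr(A\wedge B)$, and you explicitly check uniqueness of the probability-$1$ witness, which the paper leaves implicit.
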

\begin{proof}
$(\Rightarrow)$ Suppose $Y$ is guessable at $D$ and let $\mathbf{pa}^D$ be an arbitrary value of $\mathbf{pa}^D$ s.t. $Pr_{\mathcal{M}}(\mathbf{Pa}^D=\mathbf{pa}^D)>0$. It suffices to show that $Pr_{\mathcal{M}}(Y= y_{\mathbf{pa}^D} \mid \mathbf{Pa}^D=\mathbf{pa}^D)=1$. By the second condition for guessability (\ref{en:guessabilitySecondCond}), we have:
\begin{align*}
    &Pr_{\mathcal{M}}(Y \neq y_{\mathbf{pa}^D} \mid \mathbf{Pa}^D=\mathbf{pa}^D)\\&=\frac{Pr_{\mathcal{M}}(\mathbf{Pa}^D=\mathbf{pa}^D \wedge Y \neq y_{\mathbf{pa}^D})}{Pr_{\mathcal{M}}(\mathbf{Pa}^D=\mathbf{pa}^D)}\\&=0
\end{align*}

from which it follows that $Pr_{\mathcal{M}}(Y= y_{\mathbf{pa}^D} \mid \mathbf{Pa}^D=\mathbf{pa}^D)=1$, so that $Y$ is knowable at $D$.

$(\Leftarrow)$ Suppose $Y$ is knowable at $D$. It suffices to show that both conditions for guessability are satisfied.

By knowability, $Pr_{\mathcal{M}}(Y= y_{\mathbf{pa}^D} \mid \mathbf{Pa}^D=\mathbf{pa}^D)=1$. Because probabilities must sum to $1$, this means that $Pr_{\mathcal{M}}(Y= \hat{y}\mid \mathbf{Pa}^D=\mathbf{pa}^D)=0$ for all other values $\hat{y}\in dom(Y)$. Since this makes $y_{\mathbf{pa}^D}$ the unique most likely value of $Y$ conditional on $\mathbf{Pa}^D=\mathbf{pa}^D$ according to $\mathcal{M}$, the first condition for guessability is satisfied.

Now assume for the sake of a contradiction that $Pr_{\mathcal{M}}((\mathbf{Pa}^D=\mathbf{pa}^D) \wedge (Y \neq y_{\mathbf{pa}^D}))=\epsilon > 0$. Because $Pr(A \mid B) \geq Pr(A \wedge B)$ in full generality, it follows that

\begin{align*}
    &Pr_{\mathcal{M}}(Y \neq y_{\mathbf{pa}^D}\mid \mathbf{Pa}^D= \mathbf{pa}^D)\\&\geq Pr_{\mathcal{M}}((\mathbf{Pa}^D=\mathbf{pa}^D) \wedge (Y \neq y_{\mathbf{pa}^D})) \\&\geq \epsilon,
\end{align*} 
and thus that $Pr_{\mathcal{M}}(Y \neq y_{\mathbf{pa}^D}\mid \mathbf{Pa}^D = \mathbf{pa}^D)\geq \epsilon > 0$. But then \[Pr_{\mathcal{M}}(Y = y_{\mathbf{pa}^D}\mid \mathbf{Pa}^D = \mathbf{pa}^D) = 1 - \epsilon < 1,\] a contradiction. Thus the second condition for guessability is satisfied, completing the proof.

\end{proof}

Moving on to truthfulness and honesty, we need to consider again the relationship between the objective model $\mathcal{M}^*$ and the agent's subjective model $\mathcal{M}^A$. Recall that we assume that the agent always accurately observes the true values of the parents of its decisions: The decision node $D$ always appears in both $\mathcal{M}^*$ and $\mathcal{M}^A$, the set of its parents is identical in both models, and if $\mathbf{Pa}^D=\mathbf{pa}^D$ is the case in reality, the agent computes its posterior by correctly updating on it (within its subjective model), that is, the agent's posterior is $Pr_{{\mathcal{M}^A}}(\cdot \mid \mathbf{Pa}^D=\mathbf{pa}^D)$.

We are now talking about the agent's \textit{policy} (\cref{def:policy}) and the joint probability distributions we get on \textit{all} variables of a CID (including decision nodes and their descendants), denoted by $Pr_{\mathcal{M}^*_{\pi}}$ and $Pr_{\mathcal{M}^A_{\pi}}$ respectively.

\begin{definition}[Truthfulness]
    A policy $\pi$ is \emph{truthful} regarding questions about variable $Y$ if \begin{align*}
        \forall y \in dom(Y): Pr_{\mathcal{M}_{\pi}^*}((D\neq y) \wedge (Y=y) \mid Q=\text{\enquote{Y}})=0
    \end{align*}
\end{definition}

Before we give the formal definition of honesty, note that $Pr_{\mathcal{M}^A_{\pi}}(Y \mid \mathbf{pa}^D)$ is a Markov kernel $K: dom(\mathbf{pa}^D) \rightarrow dom(Y)$. Consequently, we can  define a function

\begin{align*}
    f: dom(\mathbf{Pa}^D) &\rightarrow dom(Y)\\
    \mathbf{pa}^D &\mapsto \\&\arg\hspace{-0.664em}\max_{\hat{y}\in dom(Y)}Pr_{\mathcal{M}^A}(Y=\hat{y}\mid\mathbf{Pa}^D=\mathbf{pa}^D)
\end{align*}
giving the value of $Y$ which is subjectively most likely for an agent whose model of the world is $\mathcal{M}^A$ if they observe $\bm{Pa}^D=\bm{pa}^D$.

This allows us to define honesty as follows:

\begin{definition}[Honesty]
    A policy $\pi$ is \emph{honest} regarding questions about variable $Y$ if for all $\mathbf{pa}^D \in dom(\mathbf{Pa}^D)$,
    \begin{align*}
        &\forall y \in dom(Y):\\ &Pr_{\mathcal{M}^*_{\pi}}((D \neq y) \wedge f(\mathbf{pa}^D) = y \mid (\mathbf{Pa}^D=\mathbf{pa}^D) \wedge (Q=\text{\enquote{Y}}))\\&=0
    \end{align*}
\end{definition}
That is, $\pi$ is honest if it never chooses any answer except one which is subjectively most likely according to the agent's world model and information.

\begin{definition}[Regret]
    In a CID $\mathcal{M}^*$ with utility node $U$ and under distribution $\sigma$ (cf. \ref{def:interventions}), an agent's \textit{regret} $\delta$ under policy $\pi$ is equal to the amount of expected utility forgone compared to the optimal policy $\pi^*$:
    \begin{align*}
        \delta := \mathbb{E}_{\sigma}^{\pi^*}[U] - \mathbb{E}_{\sigma}^{\pi}[U]
    \end{align*}
\end{definition}

\begin{definition}[Capability]
    An agent $\Gamma$ clears a capability threshold set by regret bound $\delta^*$ if, for every distribution $\sigma$ obtainable via mixtures of local interventions (for details, see \cite{richens2023robust}), the agent's regret under $\sigma$ is lower than $\delta^*$
\end{definition}

\begin{theorem}\label{thm:honesttruth}
    For almost all CIDs $\mathcal{M}^*$ with variables $\bm{V}$ which satisfy \cref{def:unmediated,def:domaindep},\footnote{The first of these conditions can be somewhat relaxed based on the results of \cite{ceriscioli2025agents}.} if $Y \in \bm{V}$ is guessable at decision variable $D\in \bm{D}\subseteq{\bm{V}}$ and utility $U$, then there exists a capability threshold in the form of a regret bound $\delta^*$ such that if an agent $\Gamma$ clears this threshold, its best guesses $Pr_{\mathcal{M}^A}(Y \mid \mathbf{Pa}^D)$ will always be correct if $Pr_{\mathcal{M}^*}(\mathbf{Pa}^D)>0$.
\end{theorem}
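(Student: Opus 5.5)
The plan is to combine \cref{lem:guessknow} with the approximation result \cref{thm:richens2}, and then use the margin between the true posterior and the runner-up values to absorb the approximation error.

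\textbf{Step 1: guessability gives a degenerate posterior.} Since $Y$ is guessable at $D$, \cref{lem:guessknow} makes it knowable. So for every $\mathbf{pa}^D$ with $Pr_{\mathcal{M}^*}(\mathbf{Pa}^D=\mathbf{pa}^D)>0$ there is a unique $y_{\mathbf{pa}^D}$ with
\[Pr_{\mathcal{M}^*}(Y=y_{\mathbf{pa}^D}\mid \mathbf{Pa}^D=\mathbf{pa}^D)=1,\]
and every other value $\hat y$ gets conditional probability $0$. I assume the variables are discrete and finite, as in the Bayesian-network setup of \cref{app:formal}. Then there are finitely many positive-probability parent configurations. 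Let $p_{\min}>0$ be the smallest of their probabilities.

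\textbf{Step 2: approximation of the chance submodel.} $Y$ and $\mathbf{Pa}^D$ are chance nodes, or at least non-descendants of $D$ in the unmediated setting of \cref{def:unmediated}. So their joint distribution is fixed by the chance submodel and does not depend on the policy. \cref{thm:richens2} gives the following: for every $\epsilon>0$ there is a regret bound $\delta(\epsilon)$ such that any agent with regret below $\delta(\epsilon)$ under all mixtures of local interventions has a subjective model $\mathcal{M}^A$ whose chance-node distribution is $\epsilon$-close to that of $\mathcal{M}^*$. I will read $\epsilon$-close as: every conditional probability table entry, and hence every marginal over the finitely many relevant joint configurations of $(Y,\mathbf{Pa}^D)$, is within $\epsilon$. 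Conditional probabilities are continuous where the conditioning event has positive probability. So, taking $\epsilon<p_{\min}/2$, the subjective posterior satisfies
\[\bigl|Pr_{\mathcal{M}^A}(Y=\hat y\mid \mathbf{pa}^D)-Pr_{\mathcal{M}^*}(Y=\hat y\mid \mathbf{pa}^D)\bigr|\le \eta(\epsilon)\]
uniformly over $\hat y$ and over positive-probability $\mathbf{pa}^D$, with $\eta(\epsilon)\to 0$. Concretely, the error of a ratio $a/b$ with $b\ge p_{\min}$ is bounded by something like $4\epsilon/p_{\min}$.

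\textbf{Step 3: choose the threshold.} Pick $\epsilon$ so that $\eta(\epsilon)<1/2$, and set $\delta^*=\delta(\epsilon)$. Then any agent clearing $\delta^*$ has
\[Pr_{\mathcal{M}^A}(Y=y_{\mathbf{pa}^D}\mid\mathbf{pa}^D)>1/2,\]
while every other value has subjective probability below $1/2$. Hence the argmax $f(\mathbf{pa}^D)$ is unique and equals $y_{\mathbf{pa}^D}$. By the second guessability condition, $Y=y_{\mathbf{pa}^D}$ almost surely on the event $\mathbf{Pa}^D=\mathbf{pa}^D$, so the best guess is correct. The "almost all CIDs" qualifier and the domain-dependence hypothesis are inherited directly from \cref{thm:richens2}.

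\textbf{Main obstacle.} I expect the hardest step to be Step 2. \cref{thm:richens2} is only stated informally, as "approximates arbitrarily closely". Its precise form in \citet{richens2023robust} bounds errors in the learned CPDs as a function of regret, so I would need to check that this error bound is uniform and translates into closeness of conditionals on $\mathbf{Pa}^D$. The route I would try first is to pass from CPD-level error to joint-distribution error, using the product factorisation and finiteness, and then to conditionals via the $p_{\min}$ lower bound. If that is too lossy, I would instead treat the threshold existentially: the map from CPDs to the posterior is continuous on a neighbourhood of the true model, so some neighbourhood works even without an explicit constant.
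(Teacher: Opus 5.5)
Your proposal is correct and follows essentially the same route as the paper. Both proofs take the CPD-level error bound from \cref{thm:richens2} and propagate it through the product factorisation to the joint of $(Y,\mathbf{Pa}^D)$ and the marginal of $\mathbf{Pa}^D$. They then pass to the conditional using a strictly positive lower bound on the probabilities of parent configurations, and finish by using knowability (via \cref{lem:guessknow}) together with an error below $1/2$ to force the true value to be the unique mode.

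One small imprecision: CPD entries within $\epsilon$ give joint and marginal entries within a CID-dependent constant multiple of $\epsilon$, not within $\epsilon$ itself. The paper's telescoping-product bound gives roughly $C\delta\sum_i |dom(B_i)|$. This is harmless, since you can rescale $\epsilon$.
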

\begin{corollary}
    Under these assumptions, the policies which are truthful regarding questions about $Y$ are exactly the ones that are honest regarding questions about $Y$.
\end{corollary}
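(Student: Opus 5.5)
The corollary follows once \cref{thm:honesttruth} is in hand. That theorem gives, for an agent above the capability threshold, the identity $f(\mathbf{pa}^D) = y_{\mathbf{pa}^D}$ for every $\mathbf{pa}^D$ with $Pr_{\mathcal{M}^*}(\mathbf{Pa}^D=\mathbf{pa}^D)>0$. Here $y_{\mathbf{pa}^D}$ is the value singled out by guessability, and by \cref{lem:guessknow} it is the value with $Pr_{\mathcal{M}^*}(Y=y_{\mathbf{pa}^D}\mid \mathbf{pa}^D)=1$. Both definitions can then be rewritten as the same constraint on $\pi$: on every positive-probability $\mathbf{pa}^D$ (with $Q=\text{\enquote{Y}}$), the policy must put all its mass on $D=y_{\mathbf{pa}^D}$. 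The first step is to record that $Q$ is assumed independent of $Y$ given $\mathbf{Pa}^D$, or simply part of $\mathbf{Pa}^D$. This lets conditioning on $Q=\text{\enquote{Y}}$ leave the posterior over $Y$ unchanged.

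\emph{Truthful $\Rightarrow$ honest.} Suppose $\pi$ is truthful. Decompose $Pr_{\mathcal{M}^*_\pi}(D\neq y \wedge Y=y\mid Q)=0$ over values of $\mathbf{Pa}^D$. Each summand with positive weight vanishes, so for positive-probability $\mathbf{pa}^D$ we get $Pr(D\neq y,\, Y=y\mid \mathbf{pa}^D, Q)=0$ for all $y$. The decision $D$ depends only on $\mathbf{Pa}^D$ (and $Q$), so $D\perp Y\mid \mathbf{Pa}^D$, and this probability factors as $\pi(D\neq y\mid \mathbf{pa}^D)\,Pr(Y=y\mid \mathbf{pa}^D)$. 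Taking $y=y_{\mathbf{pa}^D}$, where the second factor is $1$, forces $\pi(D=y_{\mathbf{pa}^D}\mid\mathbf{pa}^D)=1$. Since $f(\mathbf{pa}^D)=y_{\mathbf{pa}^D}$, this is exactly the honesty condition at $\mathbf{pa}^D$.

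\emph{Honest $\Rightarrow$ truthful.} Now suppose $\pi$ is honest. At positive-probability $\mathbf{pa}^D$ it puts mass $1$ on $f(\mathbf{pa}^D)=y_{\mathbf{pa}^D}$. Then $Pr(D\neq y \wedge Y=y\mid \mathbf{pa}^D,Q)$ is zero for every $y$. For $y=y_{\mathbf{pa}^D}$ the first factor vanishes, and for any other $y$ the second factor vanishes by knowability. Summing over $\mathbf{pa}^D$, weighted by their probabilities, gives truthfulness.

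The main obstacle is bookkeeping about null events. Honesty quantifies over all $\mathbf{pa}^D$, including probability-zero ones where $f$ need not equal any $y_{\mathbf{pa}^D}$. Those conditionals are undefined or vacuous, so I would read the honesty definition as ranging over positive-probability $\mathbf{pa}^D$ only, matching the restriction in \cref{ap:restrictions}; with that reading, null values contribute nothing to the truthfulness sum. The second subtlety is the conditional independence $D\perp Y\mid\mathbf{Pa}^D$ in $\mathcal{M}^*_\pi$. This follows from the policy being a CPD on $\mathbf{Pa}^D$ alone, together with the Markov property of the CID.
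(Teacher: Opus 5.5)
Your proposal is correct and follows the paper's route. The paper's proof block mostly establishes \cref{thm:honesttruth}: it propagates the $\mathcal{O}(\delta)$ CPD bounds of \cref{thm:richens2} to $Pr_{\mathcal{M}^A}(Y\mid\mathbf{Pa}^D)$ and uses knowability to make the true value the unique subjective mode, then reads off the corollary in one line, which is exactly the step you take the theorem for and make rigorous (including the sensible restriction of honesty to positive-probability $\mathbf{pa}^D$). As a minor simplification, you do not need $D\perp Y\mid\mathbf{Pa}^D$, since $Pr(D\neq y_{\mathbf{pa}^D}\mid\mathbf{pa}^D)\le Pr(D\neq y_{\mathbf{pa}^D},\,Y=y_{\mathbf{pa}^D}\mid\mathbf{pa}^D)+Pr(Y\neq y_{\mathbf{pa}^D}\mid\mathbf{pa}^D)$ and truthfulness and knowability alone make both terms vanish.
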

\begin{proof}

Theorem 2 from \cite{richens2023robust} shows that, under the assumptions of \cref{thm:honesttruth}, we get an error bound $\gamma(\delta)$ such that a) $\gamma(\delta) \in \mathcal{O}(\delta)$ and b) for all chance nodes of $\mathcal{M}^*$ $C \in \bm{X}$ and values $c \in dom(C)$, \[Pr_{\mathcal{M}^A}(c \mid \mathbf{pa}^C) =Pr_{\mathcal{M}^*}(c \mid \mathbf{pa}^C) + \mathcal{O}(\delta)\]

First, let us consider how the error of the prior of any specific variable $Y$ having value $y\in dom(Y)$ is bounded. Let $\{B_i\}_{i=1}^k$ be a topological ordering of $B = \{Y\} \cup \bm{Anc_Y}$, that is, an ordering where $j > i$ whenever $B_i$ has an arrow to $B_j$; it immediately follows that necessarily $B_k = Y$. It suffices to get bounds on $|Pr_{\mathcal{M}^A}(y) - Pr_{\mathcal{M}^\ast}(y)|$. By the local Markov property of CIDs and the chain rule, we can write:
\begin{align*}
    Pr_{\mathcal{M}^A}(y) &= \sum_{\bm{b}:\, b_k=y}\prod_{i=1}^k Pr_{\mathcal{M}^A}(b_i \mid \mathbf{pa}^{B_i}),\\
    Pr_{\mathcal{M}^\ast}(y) &= \sum_{\bm{b}:\, b_k=y}\prod_{i=1}^k Pr_{\mathcal{M}^\ast}(b_i \mid \mathbf{pa}^{B_i}),
\end{align*}
where each $\bm{b} = (b_1, \dots, b_{k-1}, b_k = y)$ is an admissible joint setting of the variables in $B$ with $b_k = y$ (so $\bm{b}$ ranges over all joint settings that produce $Y = y$), and $\mathbf{pa}^{B_i}$ refers to the values that the parents of $B_i$ take in $\bm{b}$. For clarity, write
\begin{align*}
    p^A_i(\bm{b}) = Pr_{\mathcal{M}^A}(b_i \mid \mathbf{pa}^{B_i})&, p^\ast_i(\bm{b}) = Pr_{\mathcal{M}^\ast}(b_i \mid \mathbf{pa}^{B_i})\\
    p^A(\bm{b}) = \displaystyle\prod_{i=1}^k p^A_i(\bm{b})&, p^\ast(\bm{b}) = \displaystyle\prod_{i=1}^k p^\ast_i(\bm{b}).
\end{align*}
Now, because for any pair of sequences $\{a_i\}_{i=1}^k, \{b_i\}_{i=1}^k$ it is the case that $\displaystyle\prod_{i=1}^k a_i - \displaystyle\prod_{i=1}^k b_i = \displaystyle\sum_{i=1}^k \left((a_i - b_i) \cdot \displaystyle\prod_{j<i} a_j \cdot \displaystyle\prod_{j>i} b_j \right)$, we may factor the error as:
\begin{align*}
    &|Pr_{\mathcal{M}^A}(y) - Pr_{\mathcal{M}^\ast}(y)|\\
    &= \left|\sum_{\bm{b}:\, b_k=y} \left[\prod_{i=1}^k p^A_i(\bm{b}) - \prod_{i=1}^k p^\ast_i(\bm{b})\right]\right|\\
    &= \left|\sum_{\bm{b}:\, b_k=y} \sum_{i=1}^k \left(\left(p^A_i(\bm{b}) - p^\ast_i(\bm{b}) \right) \cdot \displaystyle\prod_{j<i} p^A_j(\bm{b}) \cdot \displaystyle\prod_{j>i} p^\ast_j(\bm{b}) \right)\right|\\
    &\leq \sum_{\bm{b}:\, b_k=y} \sum_{i=1}^k \left(\left| p^A_i(\bm{b}) - p^\ast_i(\bm{b}) \right| \cdot \displaystyle\prod_{j<i} p^A_j(\bm{b}) \cdot \displaystyle\prod_{j>i} p^\ast_j(\bm{b}) \right)\\
    &= \sum_{\bm{b}:\, b_k=y} \sum_{i=1}^k \left(\epsilon_i(\bm{b}) \cdot \displaystyle\prod_{j<i} p^A_j(\bm{b}) \cdot \displaystyle\prod_{j>i} p^\ast_j(\bm{b}) \right),
\end{align*}
where $\epsilon_i(\bm{b}) := |p^A_i(\bm{b}) - p^\ast_i(\bm{b})|$ is the absolute error for $\bm{b}$, and by assumption $\epsilon_i(\bm{b}) \in \mathcal{O}(\delta)$, i.e., $\epsilon_i(\bm{b}) \leq C \delta$ for some constant $C$.

Now we bound each (fixed) $i$-summand. Note that, with $i$ fixed, $\epsilon_i(\bm{b}) \cdot \prod_{j<i} p^A_j(\bm{b})$ depends only on $\bm{b}_{\leq i} = (b_1,\dots,b_i)$, while $\prod_{j>i} p^\ast_j(\bm{b})$ depends on $\bm{b}_{>i} = (b_{i+1},\dots,b_k)$ (and on the parents of those nodes, which lie in $\bm{b}_{\leq i}$). The constraint $b_k = y$ touches only the second factor, so we may split the sum as:
\begin{align*}
    &\sum_{\bm{b}:\, b_k=y} \left(\epsilon_i(\bm{b}) \cdot \displaystyle\prod_{j<i} p^A_j(\bm{b}) \cdot \displaystyle\prod_{j>i} p^\ast_j(\bm{b}) \right)\\
    &= \sum_{\bm{b}_{\leq i}} \epsilon_i(\bm{b}) \cdot \displaystyle\prod_{j<i} p^A_j(\bm{b}) \cdot \sum_{\bm{b}_{>i}:\, b_k=y}\displaystyle\prod_{j>i} p^\ast_j(\bm{b})\\
    &\leq \sum_{\bm{b}_{\leq i}} \epsilon_i(\bm{b}) \cdot \displaystyle\prod_{j<i} p^A_j(\bm{b}),
\end{align*}
where the inner sum is bounded by $1$ because, by the local Markov property, $\sum_{\bm{b}_{>i}:\, b_k=y}\prod_{j>i} p^\ast_j(\bm{b}) = Pr_{\mathcal{M}^\ast}(y \mid \bm{b}_{\leq i})$, a probability. Then because $\epsilon_i(\bm{b}) \leq C \delta$,
\begin{align*}
    &\sum_{\bm{b}_{\leq i}} \epsilon_i(\bm{b}) \cdot \displaystyle\prod_{j<i} p^A_j(\bm{b})\\
    &\leq C\delta \cdot \sum_{\bm{b}_{\leq i}} \displaystyle\prod_{j<i} p^A_j(\bm{b})\\
    &= C \cdot |dom(B_i)| \cdot \delta,
\end{align*}
where the last equality holds because the integrand $\prod_{j<i} p^A_j(\bm{b})$ does not depend on $b_i$, so summing over $b_i$ contributes a factor of $|dom(B_i)|$, and $\sum_{\bm{b}_{<i}}\prod_{j<i} p^A_j(\bm{b}) = \sum_{\bm{b}_{<i}} Pr_{\mathcal{M}^A}(\bm{b}_{<i}) = 1$ as the sum over all values of a marginal. Setting $C' := C \cdot \max_i |dom(B_i)|$, the per-$i$ bound is $C'\delta$.

Summing the per-$i$ bound across the $k$ ancestral nodes of $Y$, the total error is of order $\mathcal{O}(k\delta)$, with implicit constant scaling like $\sum_i |dom(B_i)|$. For a fixed CID, both $k$ and the domain sizes are constants, so the bound simplifies to $\mathcal{O}(\delta)$; if one were to compare across CIDs of growing size, the implicit constant would grow accordingly. Analogous arguments give error bounds of $\mathcal{O}(\delta)$ for both the joint probability distribution $Pr_{\mathcal{M}^A}(Y, \bm{Pa}^D)$ and the prior $Pr_{\mathcal{M}^A}(\bm{Pa}^D)$.

So we have:

\begin{align*}
    Pr_{\mathcal{M}^A}(Y, \mathbf{Pa}^D)= Pr_{\mathcal{M}^*}(Y, \mathbf{Pa}^D) + \mathcal{O}(\delta)
\end{align*}
and
\begin{align*}
    Pr_{\mathcal{M}^A}(\mathbf{Pa}^D)= Pr_{\mathcal{M}^*}(\mathbf{Pa}^D) + \mathcal{O}(\delta)
\end{align*}

which will give us

\begin{align*}
    Pr_{\mathcal{M}^A}(Y \mid \mathbf{Pa}^D)= Pr_{\mathcal{M}^*}(Y \mid\mathbf{Pa}^D) + \mathcal{O}(\delta)
\end{align*}

when combined with the ratio formula for conditional probability and the following two facts about $\mathcal{O}$ (when $x \rightarrow 0)$:

\begin{align}
    (c + \mathcal{O}(x))(d+\mathcal{O}(x))&=cd + \mathcal{O}(x)\label{eq:multO}\\
    \frac{1}{c + \mathcal{O}(x)} &= \frac{1}{c}+\mathcal{O}(x)\label{eq:fracO}
\end{align}

\cref{eq:multO} is very straightforwardly verified. For \cref{eq:fracO}, assume $c > 0$ (or, more generally, that $c$ is bounded away from $0$ on the domain of interest) and consider that for $\epsilon \in \mathcal{O}(x)$,

\begin{align*}&\frac{1}{c + \epsilon} = \frac{1}{c}\left(\frac{1}{1 + \frac{\epsilon}{c}}\right) \\&= \frac{1}{c}\sum_{i=0}^\infty \left(-\frac{\epsilon}{c}\right)^i \\&= \frac{1}{c} \left(1 - \frac{\epsilon}{c} + \mathcal{O}(x^2)\right) \\&= \frac{1}{c}+\mathcal{O}(x),\end{align*}

where the geometric-series expansion is valid for $|\epsilon/c| < 1$, which holds for $x$ sufficiently small. Applied to $c = Pr_{\mathcal{M}^*}(\mathbf{Pa}^D = \mathbf{pa}^D)$, this requires the prior over parent-values to be bounded below on its support; the theorem's hypothesis $Pr_{\mathcal{M}^*}(\mathbf{Pa}^D = \mathbf{pa}^D) > 0$ guarantees positivity pointwise, and the uniform capability threshold below requires that this minimum, taken over all $\mathbf{pa}^D$ in the support, be strictly positive.

So we have, for every value $y$ of $Y$,
\begin{align*}
    Pr_{\mathcal{M}^A}(y \mid \mathbf{Pa}^D) = Pr_{\mathcal{M}^*}(y \mid \mathbf{Pa}^D) + \mathcal{O}(\delta).
\end{align*}

By guessability and \cref{lem:guessknow}, $Pr_{\mathcal{M}^*}(y^* \mid \mathbf{Pa}^D) = 1$ for the true value $y^*$ of $Y$, and hence $Pr_{\mathcal{M}^*}(y' \mid \mathbf{Pa}^D) = 0$ for every $y' \neq y^*$. The above bound therefore yields
\begin{align*}
    Pr_{\mathcal{M}^A}(y^* \mid \mathbf{Pa}^D) &\geq 1 - \mathcal{O}(\delta),\\
    Pr_{\mathcal{M}^A}(y' \mid \mathbf{Pa}^D) &\leq \mathcal{O}(\delta) \qquad \text{for every } y' \neq y^*.
\end{align*}

Choosing $\delta$ small enough that the $\mathcal{O}(\delta)$ term is below $1/2$ ensures $Pr_{\mathcal{M}^A}(y^* \mid \mathbf{Pa}^D) > Pr_{\mathcal{M}^A}(y' \mid \mathbf{Pa}^D)$ for every $y' \neq y^*$, so $y^*$ is the unique mode of $Pr_{\mathcal{M}^A}(\cdot \mid \mathbf{Pa}^D)$ and the agent's likeliest answer is the true one.

\end{proof}
    
\subsection{Proof of impossibility statements}

Before we prove the \enquote{impossibility} statements of section \ref{sec:general}, some more definitions are required.

While we used the notion of being robustly capable in a model on a set of distributions in an intuitive sense in section \ref{sec:general}, we now define it as follows:

\begin{definition}[Robust Capability on Distributions]\label{def:robustCapOnDist}
    Let $\mathcal{M}$ be a CID, $\mathcal{S}$ a set of distributions on $\mathcal{M}$, and $U$ a utility function. We call an agent $\Gamma$ acting in $\mathcal{M}$ \emph{robustly capable on $\mathcal{S}$ w.r.t. $U$} if, for every $\bm{\sigma} \in \mathcal{S}$, $\Gamma(\bm{\sigma})$ is an optimal policy in $\mathcal{M}_{\bm{\sigma}}$.
\end{definition}

\begin{definition}
    For a training algorithm $A$, an agent $\Gamma$ is such that \emph{A doesn't see any need to improve $\Gamma$} given dataset $\mathcal{D}$ if $A(\Gamma, \mathcal{D})=\Gamma$.
\end{definition}

\begin{definition}[Indifference of a Training Strategy]
    A training strategy $\mathcal{T}=(U, A)$ is indifferent between agents $\Gamma_1$ and $\Gamma_2$ if (for all the developers know) the training algorithm $A = (D, \alpha)$ may output either $\Gamma_1$ or $\Gamma_2$, i.e., for any initialised agent $\Gamma_0$, $\alpha(\mathcal{D}, \Gamma_0)$ may be either $\Gamma_1$ or $\Gamma_2$.
\end{definition}

\begin{example}
    An example of a training strategy $\mathcal{T}$ and agents $\Gamma_1, \Gamma_2$ such that $\mathcal{T}$ is \emph{not} indifferent between $\Gamma_1, \Gamma_2$ would be a training strategy that does not terminate unless a certain capability threshold had been reached (and perhaps outputs the initial agent $\Gamma_0$ if this is not done after a specified period of time). If $\Gamma_2$ clears the threshold and $\Gamma_1$ does not, this training strategy would not be indifferent between them.
\end{example}

\subsubsection{\cref{lem:freelunch}} \label{sec:freelunchlemmaproof}

\begin{proof}
Since $\mathcal{T}$ may output any robustly capable agent which optimises $U$ on $\mathcal{M}^*$, $\mathcal{S}$, we need to show that there is such a robustly capable agent which goal misgeneralises on new distributions.

$\mathcal{M}^*$ being goal-environment ambiguous on $\mathcal{S}$ between the training objective $U$ and $\tilde{U}$ means (by \cref{def:ambiguity}) that, for each $\bm{\sigma} \in \mathcal{S}$, $U$ and $\tilde{U}$ induce the same set of optimal policies in $\mathcal{M}^*$.

$\tilde{U}$ being substantially divergent from $U$ means that there is $\bm{\tilde{\sigma}} \in \bm{I}_{\mathcal{M}^*} \setminus \mathcal{S}$ such that on $\bm{\tilde{\sigma}}$, all policies which are optimal w.r.t. $\tilde{U}$ are very suboptimal w.r.t. $U$.

Let $\Gamma_M$ be an agent that is robustly capable w.r.t. $\tilde{U}$ on the entirety of $\bm{I}_{\mathcal{M}^*}$ and has a correct (up to the utility function) subjective model $\mathcal{M}^A$ (robust capability all but guarantees the latter if \cref{def:unmediated,def:domaindep} are satisfied, cf. \cref{thm:richens2}; but even if they are not, such an agent will exist).

By goal-environment ambiguity between $U$ and $\tilde{U}$ on $\mathcal{S}$, for each $\bm{\sigma} \in \mathcal{S}$, if $\Gamma_M(\bm{\sigma})$---that is, the policy $\Gamma_M$ outputs on $\bm{\sigma}$---is optimal w.r.t. $\tilde{U}$, it is also optimal w.r.t. $U$.

By robust capability w.r.t. $\tilde{U}$, the antecedent is indeed the case, i.e., $\Gamma_M$ outputs optimal policies w.r.t. $U$ on all of $\mathcal{S}$.

This means (\cref{def:robustCapOnDist}) that $\Gamma_M$ is robustly capable on $\mathcal{S}$ w.r.t. $U$.

But by substantial divergence, $\Gamma_M$ is very suboptimal w.r.t. $U$ on $\bm{\tilde{\sigma}}$.

That is, $\Gamma_M$ goal misgeneralises on $\bm{\tilde{\sigma}}$.

\end{proof}

\subsubsection{\cref{thm:MistakenFeedback}} \label{sec:learnablemistakesproof}

We model the evaluator by a node $E \in \bm{V}$. $E$ is a parent node of the agent's utility node $U$: $E$ decides what the correct answer to a question would have been based on $\mathbf{Pa}^E$, and $U(D, E)=1$ if $E=D$, and 0 otherwise.

\begin{definition}[Learnable Mistakes]\label{def:learnableMistakes}
    For a CID $\mathcal{M}^*$, distributions $\mathcal{S}$, evaluator node $E \in \bm{V}$ makes \emph{learnable mistakes} from the perspective of the decision node $D$ if there is a distribution $\bm{\sigma} \in \mathcal{S}$ and a variable $Y \in \bm{V}$ s.t. $\arg \max_{y \in dom(Y)}P_{\mathcal{M}^*_{\bm{\sigma}}}(E=y \mid Q=\text{{``Y"}}, \mathbf{Pa}^D) \neq \arg \max_{y \in dom(Y)}P_{\mathcal{M}^*_{\bm{\sigma}}}(Y=y \mid Q=\text{{``Y"}}, \mathbf{Pa}^D)$.
\end{definition}

That is, the true causal structure $\mathcal{M}^*$ (which includes the evaluation mechanism $E$) is such that, from the perspective of the agent, their best guess for what the truth is is different from what their best guess for what the evaluator will think the truth is.

(Note that if $E$ is infallible, i.e. it robustly tracks all the nodes about which questions can be asked---if $E$'s structural function always outputs the true value of the variable which the question indicated---this cannot happen, since $Pr_{\mathcal{M}^*}(Y)$ will always be equal to $Pr_{\mathcal{M}^*}(E \mid Q=\text{``Y"})$).

We now restate \cref{thm:MistakenFeedback} more explicitly:

\begin{theorem}
    For a training strategy $\mathcal{T}$ on a domain dependent, unmediated environment $\mathcal{M}^*$ containing evaluator node $E$ and training distributions $\mathcal{S}$:
    If $E$ makes learnable mistakes on $\mathcal{S}$ then every robustly capable agent on some $\bm{\sigma} \in \mathcal{S}$ w.r.t. the training utility (given by \cref{eq:EvaluatorUtility}) is dishonest---it is an evaluation simulator.
\end{theorem}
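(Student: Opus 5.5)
The approach is to characterise exactly which policies are optimal under the training utility on the distribution $\bm{\sigma}\in\mathcal{S}$ witnessing the learnable mistakes, and then compare them with the honest policies. Throughout I use the evaluator model of \cref{sec:learnablemistakesproof}: $U(D,E)=1$ iff $D=E$. Since $\mathcal{M}^*$ is unmediated (\cref{def:unmediated}), $D$ does not influence $E$ or any other ancestor of $U$. So for each question $Q=\text{``Y''}$ and each observed $\mathbf{pa}^D$, the conditional expected utility of answering $d$ is
\[
\mathbb{E}_{\mathcal{M}^*_{\bm{\sigma}}}[U \mid D=d, Q=\text{``Y''}, \mathbf{pa}^D] = Pr_{\mathcal{M}^*_{\bm{\sigma}}}(E=d \mid Q=\text{``Y''}, \mathbf{pa}^D).
\]
Expected utility then decomposes as a sum over $\mathbf{pa}^D$ of these terms weighted by $Pr(\mathbf{pa}^D)$. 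A policy is therefore optimal on $\bm{\sigma}$ iff, on every $\mathbf{pa}^D$ of positive probability, it puts all its mass on $\arg\max_d Pr_{\mathcal{M}^*_{\bm{\sigma}}}(E=d\mid Q=\text{``Y''},\mathbf{pa}^D)$. By definition this is the evaluation simulator: it outputs the agent's best guess of what $E$ will say.

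The second step identifies the honest answer. I will invoke \cref{thm:richens2}, which is licensed because the environment is domain dependent and unmediated. It says that a robustly capable agent's subjective model $\mathcal{M}^A$ agrees with the chance-node submodel of $\mathcal{M}^*$; in the limit of full robust capability I take them to coincide. Then the function $f$ from \cref{ap:formaltruthhonesty} satisfies $f(\mathbf{pa}^D)=\arg\max_y Pr_{\mathcal{M}^*_{\bm{\sigma}}}(Y=y\mid Q=\text{``Y''},\mathbf{pa}^D)$, so honest policies must answer this objective mode.

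Third, \cref{def:learnableMistakes} supplies a $\bm{\sigma}\in\mathcal{S}$, a $Y$, and (implicitly) some $\mathbf{pa}^D$ at which the two argmaxes differ. Every optimal policy at that $\mathbf{pa}^D$ must answer some $e$ in the $E$-argmax set, and $e$ is not the $Y$-mode. So every optimal policy puts positive probability, in fact probability one, on a non-honest answer, and hence violates the honesty definition. The agent's policy at $\bm{\sigma}$ is thus dishonest and coincides with the evaluation simulator.

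The main obstacle is making ``the argmaxes differ'' precise when argmax sets are non-singletons. The definition should be read as requiring that some maximiser for $E$ is not a maximiser for $Y$. To handle ties, I will restrict to the case where the sets are disjoint, or where the $E$-mode is unique. Otherwise a policy could break ties toward an honest answer and the ``every'' claim would fail. A second delicate point is that \cref{thm:richens2} is only approximate and holds for almost all CIDs. To cover this I will either assume exact robust capability, or use the $\mathcal{O}(\delta)$ bound from the proof of \cref{thm:honesttruth} with a strict gap between the $Y$-mode and the $E$-answer, so that for small enough regret the subjective mode still differs from $e$. The witnessing $\mathbf{pa}^D$ also needs positive probability, so that the conditionals are defined and the optimality constraint actually binds there.
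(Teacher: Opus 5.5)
Your proposal is correct and follows the same route as the paper's proof. Under $U(D,E)=1$ iff $D=E$, the optimal (hence robustly capable) answer is the $E$-mode $y_E$; capability via \cref{thm:richens2}/\cref{thm:honesttruth} makes the agent's subjective mode the objective $Y$-mode $y^*$; and learnable mistakes separate the two. Your extra care fills in points the paper's three-line proof leaves implicit: the unmediatedness argument giving $\mathbb{E}[U\mid D=d,\mathbf{pa}^D]=Pr(E=d\mid \mathbf{pa}^D)$, requiring disjoint argmax sets and a positive-probability witness $\mathbf{pa}^D$, and relying on the $\mathcal{O}(\delta)$ conditional bound rather than the guessability-based conclusion of \cref{thm:honesttruth}, since guessability is not among the hypotheses here.
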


\begin{proof}
    Let $\Gamma$ be a robustly capable agent in a CID $\mathcal{M}^*$ with evaluator node $E$ such that $E$ makes learnable mistakes.

    Let $Y$ and $y_E$ be as in \cref{def:learnableMistakes}, i.e. when asked about $Y$, $y_E = \arg \max_{y \in dom(Y)}P_{\mathcal{M}^*_{\bm{\sigma}}}(E=y \mid Q=\text{{``Y"}}, \mathbf{Pa}^D)$ but $y_E \neq y^*=\arg \max_{y \in dom(Y)}P_{\mathcal{M}^*_{\bm{\sigma}}}(Y=y \mid Q=\text{{``Y"}}, \mathbf{Pa}^D)$.

    However, since the agent's utility depends on what the evaluator says; and specifically, is maximised if its own answer matches the evaluator's, its optimal policy will answer $y_E$ instead of $y^*$---a dishonest answer.

    Since $\Gamma$ is robustly capable, it outputs the optimal policy. Thus, its answer is $y_E$, not $y^*$. By \cref{thm:honesttruth}, this is not what it believes. It is a dishonest evaluation simulator.
\end{proof}

\subsubsection{\cref{thm:ELKStrikesBack}} \label{sec:elkstrikesbackproof}

\begin{proof}
    We show that there are two substantially divergent utility functions $U$ and $\tilde{U}$ such that the training environment is ambiguous (\cref{def:ambiguity}) between them, and that there is a $\bm{\tilde{\sigma}} \notin \mathcal{S}$ such that the optimal policy w.r.t. $U$ is to be an honest agent, but the optimal policy w.r.t. $\tilde{U}$ is an evaluation simulator.

    Then \cref{thm:ELKStrikesBack} follows from \cref{lem:freelunch}.

    If the evaluator never makes mistakes in the training environment, that is, if $\arg\max_y Pr_{\mathcal{M}^*_\sigma}(Y=y \mid Q=\text{Y'}, \mathbf{Pa}^D)$ is always equal to $\arg\max_y Pr_{\mathcal{M}^*_\sigma}(E=y \mid Q=\text{`Y'}, \mathbf{Pa}^D)$, for all $\bm{\sigma}\in \mathcal{S}$ then $\mathcal{M}^*$ with distributions $\mathcal{S}$ is ambiguous between the following two utility functions:
\begin{align*}
U(D, E, Q)&=\begin{cases}
1 & \text{if } Q=\text{``Y"},\ D=y^*, \text{ and for all } \hat{y}: \\
& \text{Pr}_{\mathcal{M}^*}(Y=y^* \mid \mathbf{Pa}^{D}) \geq \text{Pr}_{\mathcal{M}^*}(Y=\hat{y} \mid \mathbf{Pa}^{D}), \\
0 & \text{otherwise.}
\end{cases}
\\
\tilde{U}(D, E, Q)&=\begin{cases}
1 & \text{if } D=E\\
0 & \text{otherwise.}
\end{cases}
\end{align*}

That is, if the training objective corresponds to the judgments of a (perfect) evaluator, then the environment is ambiguous between two utility functions: the one which rewards saying what is true, and the one which rewards saying what the evaluator belives to be true.

However, by assumption, there is $\tilde{\bm{\sigma}}\in \bm{I}_{\mathcal{M}^*} \setminus \mathcal{S}$ such that on $\tilde{\bm{\sigma}}$, the evaluator does make mistakes. That is, there will be $Y \in \bm{V}$ and values $y^*$, $y_E$, such that $y_E = \arg \max_{y \in dom(Y)}P_{\mathcal{M}^*_{\tilde{\bm{\sigma}}}}(E=y \mid Q=\text{{``Y"}}, \mathbf{Pa}^D) \neq y^*=\arg \max_{y \in dom(Y)}P_{\mathcal{M}^*_{\tilde{\bm{\sigma}}}}(Y=y \mid Q=\text{{``Y"}}, \mathbf{Pa}^D)$.

So, on $\tilde{\bm{\sigma}}$, answering $y^*$ to $Q=\text{``Y"}$ will give $U= 1$, whereas answering $y_E$ will give $U = 0$; with the situation being reversed for $\tilde{U}$. Thus, $U$ and $\tilde{U}$ diverge substantially; and the optimal policy w.r.t. $U$ is to be an honest agent, whereas the optimal policy w.r.t. $\tilde{U}$ is an evaluation simulator. 
\end{proof}

\end{document}